\DeclareMathOperator{\EE}{\mathbb{E}}
\newtheorem{assumption}{Assumption}
\newtheorem{proposition}{Proposition}
\newtheorem{theorem}{Theorem}
\newtheorem{definition}{Definition}
\newcommand{\supp}[1]{\text{supp}(#1)}
\newcommand{\expect}[2]{\EE_{#1}[#2]}
\newcommand{\mdp}{\mathcal{M}}
\newcommand{\dist}{\mathcal{D}}
\newcommand{\copier}{\textsf{CoPiEr}\xspace}
\newcommand{\update}{\textsf{UPDATE}\xspace}
\newcommand{\exchange}{\textsf{EXCHANGE}\xspace}
\newcommand{\interactive}{\textsf{INTERACTIVE}\xspace}
\title{Co-training for Policy Learning}
\author{
  \textbf{Jialin Song}\textsuperscript{\textdagger} \qquad
  \textbf{Ravi Lanka}\textsuperscript{\textdaggerdbl} \qquad
  \textbf{Yisong Yue}\textsuperscript{\textdagger} \qquad
  \textbf{Masahiro Ono}\textsuperscript{\textdaggerdbl} \qquad
  \\
  \textsuperscript{\textdagger} California Institute of Technology \\
  \textsuperscript{\textdaggerdbl}Jet Propulsion Laboratory, California Institute of Technology
}          
\begin{document}

\maketitle

\begin{abstract}
We study the problem of learning sequential decision-making policies in settings with multiple state-action representations. Such settings naturally arise in many domains, such as planning (e.g., multiple integer programming formulations) and various combinatorial optimization problems (e.g., those with both integer programming and graph-based formulations). Inspired by the classical co-training framework for classification, we study the problem of co-training for policy learning. We present sufficient conditions under which learning from two views can improve upon learning from a single view alone. Motivated by these theoretical insights, we present a meta-algorithm for co-training for sequential decision making. Our  framework is compatible with both reinforcement learning and imitation learning. We validate the effectiveness of our approach across a wide range of tasks, including discrete/continuous control and combinatorial optimization.
\end{abstract}


\section{INTRODUCTION}
\label{sec:intro}
Conventional wisdom in problem-solving suggests that there is more than one way to look at a problem. For sequential decision making problems, such as those in reinforcement learning and imitation learning, one can often utilize multiple different state-action representations to characterize the same problem. 
A canonical application example is learning solvers for hard optimization problems such as combinatorial optimization \citep{he2014learning, mirhoseini2017device, dai2017learning, song2018learning,balunovic2018learning}. It is well-known in the operations research community that many combinatorial optimization problems have multiple formulations. For example, the maximum cut problem admits a quadratic integer program as well as a linear integer program formulation \citep{boros1991max, de2007linear}. Another example is the traveling salesman problem, which admits multiple integer programming formulations \citep{orman2007survey, oncan2009comparative}.  One can also formulate many problems using a graph-based representation (see Figure \ref{fig:mvc}).
Beyond learning combinatorial optimization solvers, other examples with multiple state-action representations include robotic applications with multiple sensing modalities such as third-person view demonstrations \citep{stadie2017third} and multilingual machine translation \citep{johnson2017google}.

In the context of policy learning, one natural question is how different state-action formulations impact learning and, more importantly, how learning can make use of multiple formulations. This is related to the co-training problem \citep{blum1998combining}, where different feature representations of the same problem enable more effective learning than using only a single representation \citep{wan2009co, kumar2011co}. While co-training has received much attention in classification tasks, little effort has been made to apply it to sequential decision making problems. One issue that arises in the sequential case is that some settings have completely separate state-action representations while others can share the action space.

In this paper, we propose \copier (co-training for policy learning), a meta-framework for policy co-training that can incorporate both reinforcement learning and imitation learning as subroutines.   
Our approach is based on a novel theoretical result that  integrates and extends results from PAC analysis for co-training \citep{dasgupta2002pac}  and general policy learning with demonstrations \citep{kang2018policy}. To the best of our knowledge, we are the first to formally extend the co-training framework to policy learning.

Our contributions can be summarized as:
\vspace{-0.1in}
\begin{itemize}
    \item We present a formal theoretical framework for policy co-training. Our results include:
    1) a general theoretical characterization of policy improvement, and 2) a specialized analysis in the shared-action setting to explicitly quantify the performance gap (i.e., regret) versus the optimal policy.
    These theoretical characterizations shed light on rigorous algorithm design for policy learning that can appropriately exploit multiple state-action representations.
    \item We present \copier (co-training for policy learning), a meta-framework for policy co-training. We specialize \copier in two ways:
    1) a general mechanism for policies operating on different representations to provide demonstrations to each other, and 2) a more granular approach to sharing demonstrations in the shared-action setting.
    \item We empirically evaluate on problems in combinatorial optimization and discrete/continuous control.  We validate our theoretical characterizations to identify when co-training can improve on single-view policy learning.  We further showcase the practicality of our approach for the combinatorial optimization setting, by demonstrating superior performance compared to a wide range of strong learning-based benchmarks as well as commercial solvers such as Gurobi.
\end{itemize}

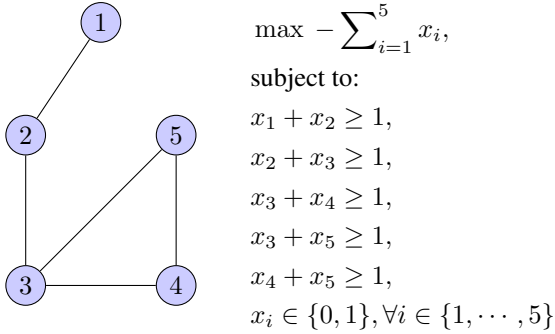
\begin{figure}[t]
    \centering
    \begin{minipage}{0.23\textwidth}
    \centering
    \begin{tikzpicture}[main_node/.style={circle,fill=blue!20,draw,minimum size=1.5em,inner sep=1.5pt}]

    \node[main_node] (3) at (0, 0) {$3$};
    \node[main_node] (2) at (0, 2)  {$2$};
    \node[main_node] (5) at (2, 2) {$5$};
    
    \node[main_node] (4) at (2, 0) {$4$};
    \node[main_node] (1) at (1, 3.5)  {$1$};

    \draw (1) -- (2);
    \draw (2) -- (3);
    \draw (3) -- (4);
    \draw (4) -- (5);
    \draw (3) -- (5);
    
    \end{tikzpicture}
    \end{minipage}
    \begin{minipage}{0.23\textwidth}
    \centering
    \begin{align*}
    &\max\ -\sum\nolimits_{i=1}^5 x_i, \\
    &\text{subject to:} \\
    &x_1 + x_2 \ge 1, \\
    &x_2 + x_3 \ge 1, \\
    &x_3 + x_4 \ge 1, \\
    &x_3 + x_5 \ge 1, \\
    &x_4 + x_5 \ge 1, \\
    &x_i \in \{0, 1\}, \forall i \in \{1, \cdots, 5\}
    \end{align*}
    \end{minipage}
\caption{Two ways to encode minimum vertex cover (MVC) problems. Left: policies learn to operate directly on the graph view to find the minimal cover set \cite{khalil2016learning}. Right: we express MVC as an integer linear program, then polices learn to traverse the resulting combinatorial search space, i.e., learn to branch-and-bound \cite{he2014learning,song2018learning}.}
\label{fig:mvc}
\end{figure}

\section{RELATED WORK}
\label{sec:related}
\paragraph{Co-training}
Our work is inspired by the classical co-training framework for classification \citep{blum1998combining}, which utilizes two different feature representations, or views, to effectively use unlabeled data to improve the classification accuracy. Subsequent extensions of co-training includes co-EM \citep{nigam2000analyzing} and co-regularization \citep{sindhwani2005co}.  Co-training has been widely used in natural language processing  \citep{wan2009co, kiritchenko2011email}, clustering \citep{kumar2011co, liu2013multi}, domain adaptation \citep{chen2011co} and game playing \citep{lample2017playing}. For policy learning, some related ideas have been explored where multiple estimators of the value or critic function are trained together \citep{wang2016dueling,van2016deep}.

In addition to the empirical successes, several previous works also establish theoretical properties of co-training \citep{blum1998combining, balcan2005co, dasgupta2002pac, wang2010new}. Common assumptions in these analyses include: 1) each view is sufficient for learning a good classifier on its own, and 2) conditional independence of the features given the labels. Recently, there are works considering weakened assumptions, such as allowing for weak dependencies between the two views \citep{blum2017efficient}, or relaxing the sufficiency condition \citep{wang2013co}.

\paragraph{Policy Learning for Sequential Decision Making}
Sequential decision making pertains to tasks where the policy performs a series of actions in a stateful environment. A popular framework to characterize the interaction between the agent and the environment is a Markov Decision Process (MDP). There are two main approaches for policy learning in MDPs: reinforcement learning and imitation learning.  For both reinforcement learning and imitation learning, we show that co-training on two views can provide improved exploration in the former and  surrogate demonstrations in the latter, in both cases leading to superior performance.

Reinforcement learning uses the observed environmental rewards to perform policy optimization. Recent works include Q-Learning approaches such as deep Q-networks \citep{mnih2013playing}, as well as policy gradient approaches such as DDPG \citep{lillicrap2015continuous}, TRPO \citep{schulman2015trust} and PPO \citep{schulman2017proximal}. Despite its successful applications to a wide variety of tasks including playing games \citep{mnih2013playing, silver2016mastering}, robotics \citep{levine2016end, kober2013reinforcement} and combinatorial optimization \citep{dai2017learning, mirhoseini2017device}, high sample complexity and unstable learning pose significant challenges in practice \citep{henderson2018deep}, often causing learning to be unreliable. 

Imitation learning uses demonstrations (from an expert) as the primary learning signal.  One popular class of algorithms is reduction-based \citep{daume2009search,ross2010efficient, ross2011reduction, ross2014reinforcement, chang2015learning}, which generates cost-sensitive supervised examples from demonstrations. Other approaches include estimating the expert's cost-to go \citep{sun2017deeply},  inverse reinforcement learning \cite{abbeel2004apprenticeship,ho2016generative,ziebart2008maximum}, and behavioral cloning \cite{syed2008game}. 
One major limitation of imitation learning is the reliance on demonstrations. One solution is to combine imitation and reinforcement learning \citep{le2018hierarchical, kang2018policy, cheng2018fast, nair2018overcoming} to learn from fewer or coarser demonstrations.

\section{BACKGROUND \& PRELIMINARIES}
\label{problem}
\paragraph{Markov Decision Process with Two State Representations.} A Markov decision process (MDP) is defined by a tuple $(\mathcal{S}, \mathcal{A}, \mathcal{P}, r, \gamma, \mathcal{S}_T)$. Let $\mathcal{S}$  denote the state space, $\mathcal{A}$ the action space, $\mathcal{P}(s'|s, a)$  the (probabilistic) state dynamics,  $r(s,a)$  the reward function, $\gamma$  the discount factor and (optinal) $\mathcal{S}_T$ a set of terminal states where the decision process ends. We consider both stochastic and deterministic MDPs. An MDP with two views can be written as $\mathcal{M}^A = (\mathcal{S}^A, \mathcal{A}^A, \mathcal{P}^A, r^A, \gamma^A, \mathcal{S}^A_T)$ and $\mathcal{M}^B = (\mathcal{S}^B, \mathcal{A}^B, \mathcal{P}^B, r^B, \gamma^B, \mathcal{S}^B_T)$. To connect the two views, we make the following assumption about the ability to translate trajectories between the two views.


\begin{assumption}
\label{assumption:mapping}
For a (complete) trajectory in $\mathcal{M}^A$, $\tau^A = (s_0^A, a_0^A, s_1^A, a_1^A, \cdots, s_n^A)$, there is a function $f_{A\rightarrow B}$ that maps $\tau^A$ to its corresponding (complete) trajectory $\tau^B$ in the other view $\mathcal{M}^B$: $f_{A\rightarrow B}(\tau^A) = \tau^B = (s_0^B, a_0^B, s_1^B, a_1^B, \cdots, s_m^B)$. 
The rewards for $\tau^A$ and $\tau^B$ are the same under their respective reward functions, i.e., $\sum\nolimits_{i=0}^{n-1}r^A(s_i^A, a_i^A) = \sum\nolimits_{j=0}^{m-1}r^B(s_j^B, a_j^B)$.  Similarly, there is a function $f_{B\rightarrow A}$ that maps trajectories in $\mathcal{M}^B$ to $\mathcal{M}^A$ which preserves the total rewards. 
\end{assumption}

Note that in Assumption \ref{assumption:mapping}, the length of $\tau^A$ and $\tau^B$ can be different because of different state and action spaces.

\paragraph{Combinatorial Optimization Example.}
Minimum vertex cover (MVC) is a combinatorial optimization problem defined over a graph $G=(V, E)$. A cover set is a subset $U\subset V$ such that every edge $e\in E$ is incident to at least one  $v\in U$. The objective is to find a $U$ with the minimal cardinality. For the graph in Figure \ref{fig:mvc}, a minimal cover set is $\{2, 3, 4\}$.

There are two natural ways to represent an MVC problem as an MDP. The first is graph-based \citep{dai2017learning} with the action space as $V$ and the state space as sequences of vertices in $V$ representing partial solutions. The deterministic transition function is the obvious choice of adding a vertex to the current partial solution. The rewards are -1 for each selected vertex. A terminal state is reached if the selected vertices form a cover.

The second way is to formulate an integer linear program (ILP) that encodes an MVC problem:
\begin{align*}
    &\max\ -\sum\nolimits_{v\in V} x_v, \\
    &\text{subject to}: \\
    &x_u + x_v \ge 1, \forall e = (u, v) \in E,\\
    &x_v \in \{0, 1\}, \forall v \in V.
\end{align*}
We can then use branch-and-bound  \citep{land2010automatic} to solve this ILP, which represents the optimization problem as a search tree, and explores different areas of a search tree through a sequence of branching operations. The MDP states then represent current search tree, and the actions correspond to which node to explore next. The deterministic transition function is the obvious choice of adding a new node into the search tree. The reward is 0 if an action does not lead to a feasible solution and is the objective value of the feasible solution minus the best incumbent objective if an action leads to a node with a better feasible solution. A terminal state is a search tree which contains an optimal solution or reaches a limit on the number of nodes to explore.

The relationship between solutions in the two formulations are clear. For a graph $G=(V, E)$, a feasible solution to the ILP corresponds to a vertex cover by selecting all the vertices $v\in V$ with $x_v = 1$ in the solution. This correspondence ensures the existence of mappings between two representations that satisfy Assumption \ref{assumption:mapping}.

Note that, despite the deterministic dynamics, solving MVC and other combinatorial optimization problems can be extremely challenging due to the very large state space. Indeed, policy learning for combinatorial optimization is a topic of active research \cite{khalil2016learning,he2014learning,song2018learning,mirhoseini2017device,balunovic2018learning}.

\paragraph{Policy Learning.} We consider policy learning over a distribution of MDPs.  For instance, there can be a distribution of MVC problems. Formally, we have a distribution $\mathcal{D}$ of MDPs that we can sample from (i.e., $\mathcal{M}\sim\mathcal{D}$).
For a policy $\pi$, we define the following terms:
\begin{align*}
\eta(\pi, \mathcal{M})& =\mathbb{E}_{\tau\sim\pi}[\sum\nolimits_{i=0}^{n-1} \gamma^{i}r(s_i, a_i)],\\
J(\pi) & = \mathbb{E}_{\mathcal{M}\sim\mathcal{D}} [\eta(\pi, \mathcal{M})],\\ 
Q_{\pi}(s, a) & =\mathbb{E}_{\tau\sim\pi}[\sum\nolimits_{i=0}^{n-1}\gamma^i r(s_i, a_i)|s_0 = s, a_0=a], \\ 
V_{\pi}(s)& =\mathbb{E}_{\tau\sim\pi}[\sum\nolimits_{i=0}^{n-1}\gamma^i r(s_i, a_i)|s_0=s], \\
A_{\pi}(s, a) & = Q_{\pi}(s, a) - V_{\pi}(s),\\ 
\end{align*}
with $\eta$ being the expected cumulative reward of an individual MDP $\mathcal{M}$, $J$ the overall objective, $Q$ the Q function, $V$ the value function and $A$ the advantage function.
The performance of two policies can be related via the advantage function \citep{schulman2015trust, kakade2002approximately}: $
    \eta(\pi', \mathcal{M}) = \eta(\pi, \mathcal{M}) + \mathbb{E}_{\tau\sim\pi'}[\sum\nolimits_{i=0}^{n-1}\gamma^i A_{\pi}(s_i, a_i)]$.
Based on Theorem \ref{theorem:occupancy} below, we can rewrite the final term with the occupancy measure, $\rho_{\pi}(s, a) =\mathbb{P}(\pi(s)=a)\sum\nolimits_{i=0}^{\infty}\gamma^i \mathbb{P}(s_i=s|\pi)$.

\begin{theorem} 
\label{theorem:occupancy}
(Theorem 2 of \citep{syed2008apprenticeship}). For any policy $\pi$, it is the only policy that has its corresponding occupancy measure $\rho_{\pi}$, i.e., there is a one-to-one mapping between policies and occupancy measures. Specifically, $\mathbb{P}(\pi(s)=a)=\frac{\rho_{\pi}(s,a)}{\sum_{a'}\rho_{\pi}(s, a')}$.
\end{theorem}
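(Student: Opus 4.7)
My plan is to verify the bijection in the two natural directions, splitting the occupancy measure into a state part and a conditional action part so that the inversion formula falls out algebraically.

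First I would introduce the (unnormalized) discounted state occupancy $d_\pi(s) := \sum_{i=0}^\infty \gamma^i \mathbb{P}(s_i = s \mid \pi)$, so that by the definition given in the excerpt $\rho_\pi(s,a) = \mathbb{P}(\pi(s)=a)\, d_\pi(s)$. Summing over actions immediately gives $\sum_{a'} \rho_\pi(s, a') = d_\pi(s)$, since $\mathbb{P}(\pi(s) = \cdot)$ is a distribution. Dividing then yields
\begin{equation*}
\frac{\rho_\pi(s,a)}{\sum_{a'} \rho_\pi(s,a')} \;=\; \mathbb{P}(\pi(s) = a)
\end{equation*}
on every state with $d_\pi(s) > 0$, which is exactly the recovery formula claimed. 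The forward map $\pi \mapsto \rho_\pi$ is well-defined by construction, so this already proves that any $\rho_\pi$ determines $\pi$ on the set of reachable states.

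The main obstacle is the usual subtlety that states with $d_\pi(s) = 0$ are never visited, so the policy there has no effect on $\rho_\pi$, and the map is literally many-to-one without a convention. I would handle this in the standard way by identifying policies that agree on all reachable states (equivalently, fixing a canonical extension such as the uniform distribution on unreachable $s$). Under this convention the formula in the theorem statement provides a well-defined inverse, and injectivity follows: if $\rho_{\pi_1} = \rho_{\pi_2}$, the formula gives the same conditional action distribution at every reachable state, while the canonical extension matches elsewhere.

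Finally, to confirm that the recovery formula really reproduces $\rho$, I would invoke the Bellman flow identity
\begin{equation*}
\sum_{a} \rho(s,a) \;=\; \mu_0(s) + \gamma \sum_{s', a'} \mathcal{P}(s \mid s', a')\, \rho(s', a'),
\end{equation*}
which any $\rho_\pi$ satisfies by unrolling its definition. Setting $\pi(a\mid s) := \rho(s,a)/\sum_{a'} \rho(s,a')$ and substituting back shows $d_\pi(s) = \sum_a \rho(s,a)$ and hence $\rho_\pi = \rho$. The algebra is routine; the only delicate step is the unreachable-state bookkeeping noted above, after which the one-to-one claim is immediate, matching the LP-based argument of Syed and Schapire cited in the excerpt.
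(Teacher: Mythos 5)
The paper does not actually prove this statement---it is imported verbatim as Theorem~2 of Syed et al.\ (2008) and used as a black box---so there is no in-paper argument to compare against. Judged on its own, your proof is correct and follows essentially the same route as the cited source: factor $\rho_\pi(s,a)$ into the discounted state occupancy $d_\pi(s)$ times the conditional action distribution, observe that marginalizing over actions recovers $d_\pi(s)$, and then use the Bellman flow identity to certify that the policy read off from an occupancy measure regenerates that same measure. Your handling of the unreachable-state caveat is the right one (and is a point the theorem statement in the paper silently glosses over): the map is only injective up to the behavior of the policy on states with $d_\pi(s)=0$, so one must either quotient by that equivalence or fix a canonical extension. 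The one step I would not wave off as ``routine algebra'' is the final substitution: showing $d_\pi(s)=\sum_a\rho(s,a)$ from the flow identity requires that the linear fixed-point equation
\begin{equation*}
v(s)=\mu_0(s)+\gamma\sum_{s',a'}\mathcal{P}(s\mid s',a')\,\pi(a'\mid s')\,v(s')
\end{equation*}
has a \emph{unique} solution, which holds because the associated operator is a $\gamma$-contraction for $\gamma<1$ (consistent with the paper's restriction to strictly discounted infinite-horizon MDPs); without that uniqueness remark the argument that $\rho_\pi=\rho$ is incomplete. With that one sentence added, the proof is a sound, self-contained replacement for the citation.
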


With slight notation abuse, define $\rho_{\pi}(s)=\sum\nolimits_{i=0}^{\infty}\gamma^i \mathbb{P}(s_i=s|\pi)$ to be the state visitation distribution. In policy iteration, we aim to maximize:
\begin{align*}
    &\mathbb{E}_{\tau\sim\pi'}[\sum\nolimits_{i=0}^{n-1}\gamma^i A_{\pi}(s_i, a_i)], \\
    &= \sum\nolimits_{i=0}^{n-1}\expect{s_i\sim \rho_{\pi'}(s)}{\expect{a_i\sim \pi'(s_i)}{\gamma^i A_{\pi}(s_i, a_i)}},\\
    &\approx \sum\nolimits_{i=0}^{n-1}\expect{s_i\sim \rho_\pi(s)}{\expect{a_i\sim \pi'(s_i)}{\gamma^i A_{\pi}(s_i, a_i)}}.
\end{align*}
This is done instead of taking an expectation over  $\rho_{\pi'}(s)$ which has a complicated dependency on a yet unknown policy $\pi'$. Policy gradient methods tend to use the approximation by using $\rho_{\pi}$ which depends on the current policy. We define the approximate objective as:
\begin{align*}
&\eta_{\pi}(\pi', \mdp) \\
&= \eta(\pi, \mdp) + \sum\nolimits_{i=0}^{n-1}\expect{s_i\sim \rho_\pi(s)}{\expect{a_i\sim \pi'(s_i)}{\gamma^i A_{\pi}(s_i, a_i)}},
\end{align*}
and its associated expectation over $\dist$ as $J_{\pi}(\pi')=\expect{\mdp\sim\dist}{\eta_{\pi}(\pi', \mdp)}$.


\section{A THEORY OF POLICY CO-TRAINING}
\label{sec:theory}
In this section, we provide two theoretical characterizations of policy co-training. These characterizations highlight a trade-off in sharing information between different views, and motivates the design of our \copier algorithm presented in Section \ref{sec:algo}.

We restrict our analysis to infinite horizon MDPs, and thus require a strict discount factor $\gamma < 1$.  We show in our experiments that our \copier algorithm performs well even in finite horizon MDPs with $\gamma=1$.
Due to space constraints, we defer all proofs to the appendix. 

We present two theoretical analyses with different types of guarantees:
\begin{itemize}
    \item Section \ref{sec:theory:demo} quantifies the \textbf{policy improvement} in terms of policy advantages and differences, and caters to policy gradient approaches. 
    \item Section \ref{sec:theory:shared} quantifies the \textbf{performance gap} with respect to an optimal policy in terms of policy disagreements, which is a stronger guarantee than policy improvement.  This analysis is restricted to the shared action space setting, and caters to learning reduction approaches. 
\end{itemize}

\subsection{GENERAL CASE: POLICY IMPROVEMENT WITH DEMONSTRATIONS}
\label{sec:theory:demo}
For an MDP $\mathcal{M}\sim \mathcal{D}$, consider the rewards of two policies with different views $\eta^A(\pi^A, \mathcal{M}^A)$ and $\eta^B(\pi^B, \mathcal{M}^B)$. If $\eta^A(\pi^A, \mathcal{M}^A)> \eta^B(\pi^B, \mathcal{M}^B)$, $\pi^A$ performs better than $\pi^B$ on this instance , and we could use the translated trajectory of $\pi^A$ as a demonstration for $\pi^B$. Even when $J(\pi^A) > J(\pi^B)$, because $J$ is computed in expectation over $\mathcal{D}$, $\pi^B$ can still outperform $\pi^A$ on some MDPs. Thus it is possible for the exchange of demonstrations to go in both directions.

Formally, we can partition the distribution $\dist$ into two (unnormalized) parts $\dist_1$ and $\dist_2$ such that the support of $\dist$, $\supp{\dist} = \supp{\dist_1}\cup\supp{\dist_2}$ and $\supp{\dist_1}\cap\supp{\dist_2}=\emptyset$, where for an MDP $\mathcal{M}\in\supp{\dist_1}, \eta(\pi^A, \mdp^A) \ge \eta(\pi^B, \mdp^B)$ and for an MDP $\mathcal{M}\in\supp{\dist_2}, \eta(\pi^B, \mdp^B) > \eta(\pi^A, \mdp^A)$. 
By construction, we can quantify the performance gap as: 
\begin{definition}
\begin{align*}
    &\delta_1 = \expect{\mdp\sim\dist_1}{\eta(\pi^A, \mdp^A)-\eta(\pi^B, \mdp^B)} \ge 0,
    \\
    &\delta_2 = \expect{\mdp\sim\dist_2}{\eta(\pi^B, \mdp^B)-\eta(\pi^A, \mdp^A)}  > 0.
\end{align*}
\end{definition}
We can now state our first result on policy improvement.
\begin{theorem} 
\label{theorem:general}
(Extension of Theorem 1 in \citep{kang2018policy}) 
Define:
\begin{align*}
&\alpha^A_{\dist} = \expect{\mdp\sim\dist}{\max\nolimits_{s} D_{KL}(\pi^A(s)\| \pi'^{A}(s))},\\ &\beta^B_{\dist_2}=\expect{\mdp\sim\dist_2}{\max\nolimits_{s} D_{JS}(\pi^B(s)\|\pi^A(s))},\\
&\alpha^B_{\dist} = \expect{\mdp\sim\dist}{\max\nolimits_{s} D_{KL}(\pi^B(s)\| \pi'^{B}(s))},\\ &\beta^A_{\dist_1}=\expect{\mdp\sim\dist_1}{\max\nolimits_{s} D_{JS}(\pi^A(s)\|\pi^B(s))}, \\
    &\epsilon^B_{\dist_2}=\max\nolimits_{\mdp\in\supp{\dist_2}}\max\nolimits_{s,a}|A_{\pi^B}(s, a)|,\\
    &\epsilon^{A}_{\dist}=\max\nolimits_{\mdp\in\supp{\dist}}\max\nolimits_{s,a}|A_{\pi^A}(s, a)|,\\
    &\epsilon^A_{\dist_1}=\max\nolimits_{\mdp\in\supp{\dist_1}}\max\nolimits_{s,a}|A_{\pi^A}(s, a)|,\\
    &\epsilon^{B}_{\dist}=\max\nolimits_{\mdp\in\supp{\dist}}\max\nolimits_{s,a}|A_{\pi^B}(s, a)|.
\end{align*}
Here $D_{KL}$ \& $D_{JS}$ denote the Kullback-Leibler and Jensen-Shannon divergence respectively. Then we have:
\begin{small}
\begin{align*}
    &J(\pi'^A) \ge J_{\pi^A}(\pi'^A) - \frac{2\gamma^A(4\beta^B_{\dist_2} \epsilon^B_{\dist_2} + \alpha^A_{\dist} \epsilon^{A}_{\dist})}{(1-\gamma^A)^2} + \delta_2, \\
    &J(\pi'^B) \ge J_{\pi^B}(\pi'^B) - \frac{2\gamma^B(4\beta^A_{\dist_1} \epsilon^A_{\dist_1} + \alpha^B_{\dist} \epsilon^{B}_{\dist})}{(1-\gamma^B)^2} + \delta_1.
\end{align*}
\end{small}
\end{theorem}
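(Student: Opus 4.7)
The plan is to prove the first inequality; the second is identical after swapping the roles of views $A$ and $B$ (and hence $\dist_1$ and $\dist_2$). I would decompose
\begin{align*}
J(\pi'^A) = \expect{\mdp\sim\dist_1}{\eta(\pi'^A, \mdp^A)} + \expect{\mdp\sim\dist_2}{\eta(\pi'^A, \mdp^A)}
\end{align*}
and bound the two pieces separately. On $\dist_1$, where $\pi^A$ is already the stronger view, no demonstrator is needed and a standard trust-region style bound suffices. On $\dist_2$, Assumption \ref{assumption:mapping} lets me translate trajectories produced by $\pi^B$ in $\mdp^B$ into equal-reward trajectories of $\mdp^A$, so that $\pi^B$ acts as a stronger surrogate demonstrator in $\mdp^A$ and a demonstration-aware bound in the style of Theorem 1 of \citep{kang2018policy} applies.

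For the $\dist_1$ piece I would invoke the Kakade--Langford identity followed by the Schulman et al.\ approximation-error inequality, giving, for each $\mdp\in\supp{\dist_1}$,
\begin{align*}
\eta(\pi'^A,\mdp^A)\ge \eta_{\pi^A}(\pi'^A,\mdp^A) - \frac{2\gamma^A \epsilon^A_{\dist}\,\max_s D_{KL}(\pi^A(s)\|\pi'^A(s))}{(1-\gamma^A)^2}.
\end{align*}
For the $\dist_2$ piece the plan is to adapt the Kang et al.\ argument per MDP: treating the translated distribution $f_{B\to A}(\tau^B)$ as a demonstrator trajectory in $\mdp^A$ whose discounted return equals $\eta(\pi^B,\mdp^B)$, the per-MDP improvement of the demonstrator over $\pi^A$ is exactly $\eta(\pi^B,\mdp^B)-\eta(\pi^A,\mdp^A)$, and the argument yields
\begin{align*}
\eta(\pi'^A,\mdp^A)\ge{}& \eta_{\pi^A}(\pi'^A,\mdp^A)+\bigl[\eta(\pi^B,\mdp^B)-\eta(\pi^A,\mdp^A)\bigr] \\
&-\frac{8\gamma^A\epsilon^B_{\dist_2}\,\max_s D_{JS}(\pi^B(s)\|\pi^A(s))}{(1-\gamma^A)^2} -\frac{2\gamma^A\epsilon^A_{\dist}\,\max_s D_{KL}(\pi^A(s)\|\pi'^A(s))}{(1-\gamma^A)^2}.
\end{align*}
Taking expectations over $\dist_1$ and $\dist_2$, summing, and collecting the two KL contributions under a single expectation over $\dist$, I recover $J_{\pi^A}(\pi'^A)+\delta_2$ minus the advertised error.

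The main obstacle I expect is the per-MDP demonstration bound in the second display. I would need to (i) verify that $D_{JS}(\pi^B(s)\|\pi^A(s))$ is well-posed in the cross-view setting, most naturally by interpreting it through the trajectory mapping so that both policies are evaluated on a shared state space, (ii) track carefully the factor of $4$ inside the $\beta^B_{\dist_2}\epsilon^B_{\dist_2}$ term, which arises from the JS-to-total-variation conversion in Kang et al.'s derivation, and (iii) confirm that the reward-preserving half of Assumption \ref{assumption:mapping} is enough to substitute $\eta(\pi^B,\mdp^B)$ for the discounted return of the surrogate demonstrator in $\mdp^A$ without extra slack. Once those are in place, merging the $\dist_1$ and $\dist_2$ contributions is routine bookkeeping.
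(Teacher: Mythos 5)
Your proposal follows essentially the same route as the paper's proof: split $J(\pi'^A)$ over $\dist_1$ and $\dist_2$, apply a plain trust-region (KL-only) bound on $\dist_1$, apply the per-MDP demonstration-aware bound of \citep{kang2018policy} with the translated $\pi^B$ trajectories as demonstrator on $\dist_2$ (yielding the $4\beta\epsilon_{\pi^B}$ term and the per-MDP margin $\delta_{\mdp}$), and then collect terms into $\alpha^A_{\dist}$, $\beta^B_{\dist_2}$, and $\delta_2$. The auxiliary points you flag are also the ones the paper addresses, namely the cross-view well-posedness of $D_{JS}$ (its Proposition~\ref{proposition:divergence}) and the adaptation of Kang et al.'s margin assumption, so the argument is correct as planned.
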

Compared to conventional analyses on policy improvement, the new key terms that determine how much the policy improves are the $\beta$'s and $\delta$'s. The $\beta$'s, which quantify the maximal divergence between $\pi^A$ and $\pi^B$, hinders improvement, while the $\delta$'s contribute positively. If the net contribution is positive, then the policy improvement bound is larger than that of conventional single view policy gradient. This insight motivates co-training algorithms that explicitly aim to minimize the $\beta$'s.



One technicality is how to compute $D_{JS}(\pi^A(s)\|\pi^B(s))$ given that the state and action spaces for the two representations might be different. Proposition \ref{proposition:divergence} ensures that we can measure the Jensen-Shannon divergence between two policies with different MDP representations.
\begin{proposition}
\label{proposition:divergence}
For representations $\mdp^A$ and $\mdp^B$ of an MDP satisfying  Assumption \ref{assumption:mapping}, the quantities $\max\nolimits_{s} D_{JS}(\pi^B(s)\|\pi^A(s))$ and $\max\nolimits_{s} D_{JS}(\pi^A(s)\|\pi^B(s))$ are well-defined.
\end{proposition}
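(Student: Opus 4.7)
The plan is to reduce the statement to the standard Jensen--Shannon divergence on a common measurable space, using Assumption~\ref{assumption:mapping} to induce a correspondence between states and action distributions across the two views. The only real obstacle is that $\pi^A(s)$ and $\pi^B(s)$ are a priori probability measures on the \emph{different} action spaces $\mathcal{A}^A$ and $\mathcal{A}^B$ (with the state $s$ also living in different state spaces), so before the classical definition of $D_{JS}$ can be invoked we must put both measures on a common space.

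First, I would argue that the trajectory mapping $f_{A\rightarrow B}$, although stated only on \emph{complete} trajectories, extends consistently to prefixes: if two complete trajectories in $\mathcal{M}^A$ agree on the first $i{+}1$ states and $i$ actions, then their images in $\mathcal{M}^B$ agree on a corresponding prefix. This prefix-compatibility is the natural property implicit in Assumption~\ref{assumption:mapping} when the two MDPs encode the same underlying problem (as in the MVC example). Under this extension, every state $s^A\in\mathcal{S}^A$ reachable by some prefix has a corresponding state $s^B=\phi(s^A)\in\mathcal{S}^B$, and every action $a^A$ available at $s^A$ corresponds to an action (or finite sequence of actions) in $\mathcal{A}^B$ starting from $s^B$.

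Second, for each pair of corresponding states $(s^A,s^B)$ I would define a push-forward distribution $\tilde\pi^A(\,\cdot\,|\,s^B)$ on $\mathcal{A}^B$ by transporting the mass $\pi^A(a^A|s^A)$ onto the corresponding action (or first action of the corresponding sub-trajectory) in $\mathcal{A}^B$. Both $\tilde\pi^A(\cdot|s^B)$ and $\pi^B(\cdot|s^B)$ then live in $\Delta(\mathcal{A}^B)$, so
$D_{JS}\bigl(\pi^A(s)\,\|\,\pi^B(s)\bigr):=D_{JS}\bigl(\tilde\pi^A(s^B)\,\|\,\pi^B(s^B)\bigr)$
is a standard Jensen--Shannon divergence, bounded above by $\log 2$; hence the supremum over $s$ is finite and well-defined. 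The symmetric case $D_{JS}(\pi^B(s)\,\|\,\pi^A(s))$ is handled identically by pushing $\pi^B$ forward into $\Delta(\mathcal{A}^A)$, and by the symmetry of JS divergence the two resulting quantities in fact coincide.

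The main obstacle is the case in which a single primitive action in one view corresponds to a multi-step sub-trajectory in the other (e.g., selecting a vertex in the graph view may unfold into several branching decisions in the ILP view, which is why the assumption allows $n\neq m$). I would handle this by treating such sub-trajectories as ``macro-actions'' in the target view, so that the push-forward remains a probability measure on a well-defined discrete set; equivalently, one can use the chain rule to factor the macro-action probability as a product of conditional step-wise probabilities along the sub-trajectory. Under either interpretation the per-state JS divergence is bounded by $\log 2$, giving the required well-definedness of $\max_s D_{JS}(\pi^A(s)\|\pi^B(s))$ and $\max_s D_{JS}(\pi^B(s)\|\pi^A(s))$.
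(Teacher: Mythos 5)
There is a genuine gap: your construction needs structure that Assumption~\ref{assumption:mapping} does not provide, and the paper's proof is specifically engineered to avoid needing it. Assumption~\ref{assumption:mapping} only gives a map $f_{A\rightarrow B}$ on \emph{complete} trajectories that preserves total reward. Your first step posits that this map is prefix-consistent and hence induces a state correspondence $\phi:\mathcal{S}^A\to\mathcal{S}^B$ and a per-action (or macro-action) correspondence; you call this ``implicit'' in the assumption, but nothing in it rules out two complete trajectories that pass through the same state $s^A$ being mapped to trajectories in $\mdp^B$ that occupy different states at any plausibly ``corresponding'' position --- indeed, since $n\neq m$ is allowed, there is not even a canonical corresponding position. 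So $\phi$ and your push-forward $\tilde\pi^A(\cdot\mid s^B)$ may simply fail to exist, and the macro-action repair inherits the same problem (identifying sub-trajectory boundaries again requires prefix structure). The paper's proof sidesteps all of this by working at the level of whole-trajectory \emph{distributions}: it maps every sampled trajectory $\tau^B\sim\pi^B$ through $f_{B\rightarrow A}$, forms the induced discounted visitation counts $\rho^A_{\pi^B}(s,a)$ in $\mdp^A$, and then invokes the policy--occupancy-measure bijection (Theorem~\ref{theorem:occupancy}) to recover a conditional action distribution $\mathbb{P}(a\mid s)=\rho^A_{\pi^B}(s,a)/\sum_{a'}\rho^A_{\pi^B}(s,a')$ at each state $s$ of $\mdp^A$. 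Both distributions then live on $\mathcal{A}^A$ and $D_{JS}$ is defined state by state, using only the complete-trajectory mapping the assumption actually supplies.

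A second, smaller error: your closing claim that the two maxima coincide ``by the symmetry of JS divergence'' does not follow. The symmetry of $D_{JS}$ applies to a fixed pair of distributions on a common space; here $\max_s D_{JS}(\pi^A(s)\|\pi^B(s))$ and $\max_s D_{JS}(\pi^B(s)\|\pi^A(s))$ compare \emph{different} pairs of distributions on different spaces ($\mathcal{A}^B$ versus $\mathcal{A}^A$), obtained by push-forwards that are generally many-to-one and lossy. The paper treats the two directions as genuinely distinct: it fixes the convention of mapping the first argument's representation into the second's, and justifies the choice by noting that the alternative would entangle the gradient of $\beta^B_{\dist_2}$ with a composition of $f_{A\rightarrow B}$ and $\pi^A$ --- a choice that would be vacuous if the two quantities automatically agreed.
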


Minimizing $\beta^B_{\dist_2}$ and $\beta^A_{\dist_1}$ is not straightforward since the trajectory mappings between the views can be very complicated. We present practical algorithms in Section \ref{sec:algo}.

\subsection{SPECIAL CASE: PERFORMANCE GAP FROM OPTIMAL POLICY IN SHARED ACTION SETTING}
\label{sec:theory:shared}

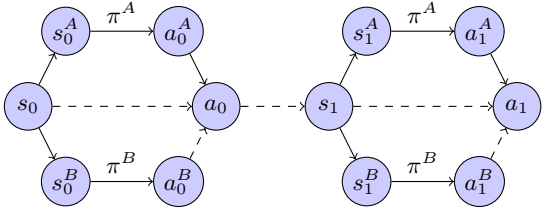
\begin{figure}
\centering
\begin{tikzpicture}[main_node/.style={circle,fill=blue!20,draw,minimum size=2em,inner sep=1.5pt}]
    \footnotesize 
    \node[main_node] (1) at (0, 0) {$s_0$};
    \node[main_node] (2) at (0.5, 1)  {$s_0^A$};
    \node[main_node] (3) at (0.5, -1) {$s_0^B$};
    
    \node[main_node] (4) at (2, 1) {$a_0^A$};
    \node[main_node] (5) at (2, -1)  {$a_0^B$};
    \node[main_node] (6) at (2.5, 0) {$a_0$};
    
    \draw [->] (1) -- (2);
    \draw [->] (1) -- (3);
    
    \draw [->] (2) -- (4) node [midway, above] {$\pi^A$};
    \draw [->] (3) -- (5) node [midway, above] {$\pi^B$};
    \draw [->] (4) -- (6);
    \draw [dashed, ->] (5) -- (6);  
    
    \draw [dashed, ->] (1) -- (6);
    
    \node[main_node] (7) at (4, 0) {$s_1$};
    \node[main_node] (8) at (4.5, 1)  {$s_1^A$};
    \node[main_node] (9) at (4.5, -1) {$s_1^B$};
    
    \node[main_node] (10) at (6, 1) {$a_1^A$};
    \node[main_node] (11) at (6, -1)  {$a_1^B$};
    \node[main_node] (12) at (6.5, 0) {$a_1$};
    
    \draw [dashed, ->] (6) -- (7);
    
    \draw [->] (7) -- (8);
    \draw [->] (7) -- (9);
    
    \draw [->] (8) -- (10) node [midway, above] {$\pi^A$};
    \draw [->] (9) -- (11) node [midway, above] {$\pi^B$};
    
    \draw [->] (10) -- (12);
    \draw [dashed, ->] (11) -- (12);  
    
    \draw [dashed, ->] (7) -- (12);
    \end{tikzpicture}
    \caption{Co-training with shared action space.}
    \label{fig:share_action}
\end{figure}

We now analyze the special case where the action spaces of the two views are the same, i.e., $\mathcal{A}^A = \mathcal{A}^B$. Figure \ref{fig:share_action} depicts the learning interaction between $\pi^A$ and $\pi^B$. For each state $s$, we can directly compare actions chosen by the two policies since the action space is the same. This insight leads to a stronger analysis result where we can bound the gap between a co-trained policy with an optimal policy. The approach we take resembles learning reduction analyses for interactive imitation learning.

For this analysis we focus on discrete action spaces with $k$ actions, deterministic learned policies, and a deterministic optimal policy (which is guaranteed to exist \citep{puterman2014markov}). 
We reduce policy learning to classification: for a given state $s$, the task of identifying the optimal 
action $\pi^*(s)$ is a classification problem. We build upon the PAC generalization bound results in \citep{dasgupta2002pac} and show that under Assumption \ref{assumption:independence}, optimizing a measure of disagreements between the two policies leads to effective learning of $\pi^*$. 
\begin{assumption}
\label{assumption:independence} For a state $s$, its two representations $s^A$ and $s^B$ are conditionally independent given the optimal action $\pi^*(s)$.
\end{assumption}
This assumption is common in analyses of co-training for classification \citep{blum1998combining, dasgupta2002pac}. Although this assumption is typically violated in practice \citep{nigam2000analyzing}, our empirical evaluation still demonstrates strong performance. 

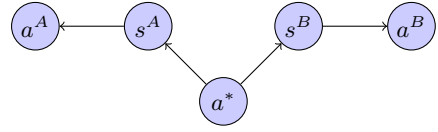
\begin{figure}
\centering
\begin{tikzpicture}[main_node/.style={circle,fill=blue!20,draw,minimum size=2em,inner sep=1pt}]
    \footnotesize
    \node[main_node] at (0, 0)  (a1)    {$a^A$};
    \node[main_node] at (1.5, 0)  (s1)    {$s^A$};

    \node[main_node] at (5, 0)  (a2)    {$a^B$};
    \node[main_node] at (3.5, 0)  (s2)    {$s^B$};
    
    \node[main_node] at (2.5, -1) (a)     {$a^*$};
    
    \draw [->] (a) -- (s1);
    \draw [->] (a) -- (s2);
    \draw [->] (s1) -- (a1);
    \draw [->] (s2) -- (a2);
\end{tikzpicture}
\caption{Graphical model encodes the conditional independence model.}
\label{fig:model}
\end{figure}

Assumption \ref{assumption:independence} corresponds to a graphical model describing the relationship between optimal actions and the state representations (Figure \ref{fig:model}).
The intuition is that, when we do not know $a^*=\pi^*(s)$, we should maximize the agreement between $a^A=\pi^A(s^A)$ and $a^B=\pi^B(s^B)$. By the data-processing inequality in information theory \citep{cover2012elements}, we know that $\mathbb{I}(a^A; a^*) \ge \mathbb{I}(a^A; a^B)$. In practice, this means that if $a^A$ and $a^B$ agree a lot, they must reveal substantial information about what $a^*$ is. We formalize this intuition and obtain an upper bound on the classification error rate, which enables quantifying the performance gap. Notice that if we do not have any information from $\pi^*$, the best we can hope for is to learn a mapping that matches $\pi^*$ up to some permutation of the action labels \citep{dasgupta2002pac}. Thus we assume we have enough state-action pairs from $\pi^*$ so that we can recover the permutation. In practice this is satisfied as we demonstrate in Section \ref{sec:sec:control}. 

Formally, we connect the performance gap between a learned policy and an optimal policy with an empirical estimation on the disagreement in action choices among two co-trained policies. Let $\{\tau^A_i\}_{i=1}^{m}$ be sampled trajectories from $\pi^A$ and $\{f_{A\rightarrow B}(\tau^A_i)\}_{i=1}^{m}$ be the mapped trajectories in $\mdp^B$.  In $\{f_{A\rightarrow B}(\tau^A_i)\}_{i=1}^{m}$, let $N(a^A=i)$ be the number of times action $i$ is chosen by $\pi^A$ and $N=\sum\nolimits_{i=1}^k N(a^A=i)$ be the total number of actions in one trajectory set. 
Let $N(a^B=i)$ be the number of times action $i$ is chosen by $\pi^B$ when going through the states in $\{f_{A\rightarrow B}(\tau^A_i)\}_{i=1}^{m}$ and $N(a^A=i, a^B=i)$ record when both actions agree on $i$.

We also require a measure of model complexity, as is common in PAC style analysis. We use $|\pi|$ to denote the number of bits needed to represent $\pi$. We can now state our second main result quantifying the performance gap with respect to an optimal policy:

\begin{theorem}
\label{theorem:special}
If Assumption \ref{assumption:independence} holds for $\mdp\sim\dist$ and a deterministic optimal policy $\pi^*$. Let $\pi^A$ and $\pi^B$ be two deterministic policies for the two representations. \\
Define:
\begin{align*}
&\hat{\mathbb{P}}(a^A=i \mid a^B=i) = \frac{N(a^A=i, a^B=i)}{N(a^B=i)}, \\
&\hat{\mathbb{P}}(a^A\ne i \mid a^B=i) = \frac{N(a^A\ne i, a^B=i)}{N(a^B=i)}, \\
&\epsilon_i(\pi^A, \pi^B,\sigma) = \sqrt{\frac{\ln{2}(|\pi^A|+|\pi^B|)+\ln{(2k/\sigma)}}{2N(a^B = i)},} \\
&\zeta_i(\pi^A, \pi^B, \sigma) = \hat{\mathbb{P}}(a^A=i\mid a^B=i) \\ 
&\hspace{2.3cm}  - \hat{\mathbb{P}}(a^A\ne i \mid a^B=i) - 2\epsilon_i(\pi^A, \pi^B, \sigma),
\end{align*}
\begin{align*}
&b_i(\pi^A, \pi^B, \sigma) = \frac{1}{\zeta_i(\pi^A, \pi^B, \delta)}(\hat{\mathbb{P}}(a^A\ne i\mid a^B=i) \\ 
&\hspace{2.4cm} +\epsilon_i(\pi^A,\pi^B,\sigma)), \\
&\ell(s, \pi) = \textbf{1}(\pi(s) \ne \pi^*(s)), \\
&\epsilon_A=\expect{s\sim\rho_{\pi^A}}{\ell(s, \pi^A)}.
\end{align*}
Then with probability $1-\sigma$:
\begin{align*}
\epsilon_A & \le \max\nolimits_{j\in\{1, \cdots, k\}}b_j(\pi^A, \pi^B, \sigma),\\
\eta(\pi^A, \mdp^A) &\ge \eta(\pi^*, \mdp) - uT\epsilon_A,
\end{align*}
where $T$ is the time horizon and $u$ is the largest one-step deviation loss compared with $\pi^*$.
\end{theorem}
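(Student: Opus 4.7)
I would split the statement into its two parts and tackle them separately: a PAC-style classification bound that upper-bounds $\epsilon_A$ via the empirical disagreement between $\pi^A$ and $\pi^B$, followed by a standard reduction-to-classification performance-difference argument that converts the per-step error rate into a cumulative reward gap of size $uT\epsilon_A$. For the first inequality, I would follow the co-training analysis of \citep{dasgupta2002pac}. Fixing an action label $i$ and conditioning on the event $\{a^B = i\}$, Assumption \ref{assumption:independence} implies that the deterministic labels $a^A = \pi^A(s^A)$ and $a^B = \pi^B(s^B)$ are conditionally independent given $\pi^*(s)$. This lets me decompose $\mathbb{P}(a^A \ne \pi^*(s) \mid a^B = i)$ by summing over the latent value of $\pi^*(s)$ and to argue that a positive margin between the agreement and disagreement probabilities (the quantity $\zeta_i$, once empirical slack is subtracted) forces $\mathbb{P}(a^A \ne \pi^*(s) \mid a^B = i) \le b_i(\pi^A,\pi^B,\sigma)$.

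Concretely, the steps are: (i) apply Hoeffding's inequality twice on the $N(a^B = i)$ samples to control $|\hat{\mathbb{P}}(a^A = i \mid a^B = i) - \mathbb{P}(a^A = i \mid a^B = i)|$ and its disagreement analogue; (ii) take a union bound over the $k$ action labels and the $2^{|\pi^A|+|\pi^B|}$ possible policy pairs to absorb model complexity into $\epsilon_i(\pi^A,\pi^B,\sigma)$; (iii) combine the resulting high-probability inequalities with the conditional-independence decomposition to obtain the Dasgupta-style bound $b_i$. Since the events $\{a^B = i\}$ partition the states visited by $\pi^A$ after translation through $f_{A\rightarrow B}$, taking a maximum over $i$ yields $\epsilon_A = \expect{s\sim\rho_{\pi^A}}{\ell(s,\pi^A)} \le \max_j b_j(\pi^A, \pi^B, \sigma)$.

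For the second inequality, I would use a telescoping performance-difference argument standard in imitation learning \citep{ross2011reduction}. Writing $\eta(\pi^*, \mdp) - \eta(\pi^A, \mdp^A)$ as $\sum_{t=0}^{T-1}\expect{s_t \sim \rho_{\pi^A}}{V_{\pi^*}(s_t) - Q_{\pi^*}(s_t,\pi^A(s_t))}$ (using Assumption \ref{assumption:mapping} to equate rewards across views), each summand vanishes when $\pi^A(s_t) = \pi^*(s_t)$ and is at most $u$ otherwise by the definition of the one-step deviation loss. Since the aggregate disagreement probability across the horizon under $\rho_{\pi^A}$ is $\epsilon_A$, the total drop is bounded by $uT\epsilon_A$, which rearranges to the claim.

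The main obstacle is making the Dasgupta-style generalization bound rigorous in our sequential setting: the states on which $\hat{\mathbb{P}}$ is evaluated come from rollouts of $\pi^A$ rather than i.i.d.\ draws, and $f_{A\rightarrow B}$ may change trajectory lengths, so the counts $N(a^B = i)$ require care. I would address this by viewing each visited state as a draw from the occupancy measure $\rho_{\pi^A}$, applying concentration to a martingale version of the visit counts, and noting that Assumption \ref{assumption:independence} is postulated per-state and hence transfers to the induced marginal. The relabeling ambiguity flagged before the theorem statement is dispatched by the standing assumption that enough $\pi^*$ samples pin down the action labeling, so no further work is needed there.
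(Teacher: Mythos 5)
Your plan follows the same route as the paper's proof: the first inequality is obtained by invoking the Dasgupta-style co-training PAC bound (Corollary 5 of \citep{dasgupta2002pac}, adapted from partial to full classifiers) on the disagreement counts over $\pi^A$'s sampled paths, and the second by applying the reduction-to-classification guarantee of Theorem 2.2 in \citep{ross2011reduction}. Your sketch actually unpacks both cited results in more detail than the paper does, and your flagged concern about the non-i.i.d.\ nature of rollout samples is a real subtlety that the paper's own proof passes over silently.
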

To obtain a small performance gap compared to $\pi^*$, one must minimize $\epsilon_A$, which measures the disagreement between $\pi^A$ and $\pi^*$. However, we cannot directly estimate this quantity since we only have limited sample trajectories from $\pi^*$. Alternatively, we can minimize an upper bound, $\max\nolimits_{j\in\{1, \cdots, k\}}b_j(\pi^A, \pi^B, \delta)$, which measures the maximum disagreement on actions between $\pi^A$ and $\pi^B$ and, importantly, can be estimated via samples. In Section \ref{sec:algo:special}, we design an algorithm that approximately minimizes this bound. The advantage of two views over a single view enables us to establish an upper bound on $\epsilon_A$, which is otherwise unmeasureable.

\section{THE \copier ALGORITHM}
\label{sec:algo}
We now present practical algorithms motivated by the theoretical insights from Section \ref{sec:theory}. We start with a meta-algorithm named \copier (Algorithm \ref{alg:copier}), whose important subroutines are \exchange and \update. We provide two concrete instantiations  for the general case and the special case with a shared action space. 

\begin{algorithm}[H]
   \caption{\copier (Co-training for Policy Learning)}
   \label{alg:copier}
\begin{algorithmic}[1]
   \STATE {\bfseries Input:} A distribution $\dist$ of MDPs, two policies $\pi^A, \pi^B$, mapping functions $f_{A\rightarrow B}, f_{B\rightarrow A}$
   \REPEAT
   \STATE Sample $\mdp\sim\dist$, form $\mdp^A, \mdp^B$
   \STATE Run $\pi^A$ on $\mdp^A$ to generate trajectories $\{\tau^A_i\}_{i=1}^{m}$
   \STATE Run $\pi^B$ on $\mdp^B$ to generate trajectories $\{\tau^B_j\}_{j=1}^{n}$   
   \STATE $\{{\tau'_i}^A\},\{{\tau'_j}^B\} \leftarrow  \exchange(\{\tau^A_i\}, \{\tau^B_j\})$
   \STATE $\pi^A\leftarrow \update(\pi^A, \{\tau^A_i\}, \{{\tau'_j}^A\})$
   \STATE $\pi^B\leftarrow \update(\pi^B, \{\tau^B_i\}, \{{\tau'_j}^B\})$
   \UNTIL{Convergence}
\end{algorithmic}
\end{algorithm}

\subsection{GENERAL CASE}
\label{sec:algo:general}
Algorithm \ref{alg:exchange:general} covers the general case for exchanging trajectories generated by the two policies. First we estimate the relative quality of the two policies from sampled trajectories (Lines 2-4 in Algorithm \ref{alg:exchange:general}). Then we use the trajectories from the better policy as demonstrations for the worse policy on this MDP. This mirrors the theoretical insight presented in Section \ref{sec:theory}, where based on which sub-distribution an MDP is sampled from, the relative quality of the two policies is different.

For \update, we can form a loss function that is derived from either imitation learning or reinforcement learning.  
Recall that we aim to optimize the $\beta$ terms in Theorem \ref{theorem:general}, however it is infeasible to directly optimize them. So we consider a surrogate loss $C$ (Line 2 of Algorithm \ref{alg:update}) that measures the policy difference. In practice, we typically use behavior cloning loss as the surrogate.

\begin{algorithm}[t]
   \caption{\exchange: General Case}
   \label{alg:exchange:general}
\begin{algorithmic}[1]
   \STATE {\bfseries Input:} Trajectories $\{\tau^A_i\}_{i=1}^m$ and $\{\tau^B_j\}_{j=1}^n$ 
   \STATE Compute estimate $\hat{\eta}(\pi^A, \mdp^A) = \frac{1}{m}\sum\nolimits_{i=1}^m r(\tau^A_i)$
   \STATE Compute estimate $\hat{\eta}(\pi^B, \mdp^B) = \frac{1}{n}\sum\nolimits_{j=1}^n r(\tau^B_j)$
   \IF{$\hat{\eta}(\pi^A, \mdp^A) > \hat{\eta}(\pi^B, \mdp^B)$}
   \STATE $\{\tau^{A\rightarrow B}_i\} \leftarrow \{f_{A\rightarrow B}(\tau^A_i)\}_{i=1}^m$
   \STATE $\{\tau^{B\rightarrow A}_j\} \leftarrow \emptyset$   
   \ELSE 
   \STATE $\{\tau^{A\rightarrow B}_i\} \leftarrow \emptyset$ 
   \STATE $\{\tau^{B\rightarrow A}_j\} \leftarrow \{f_{B\rightarrow A}(\tau^B_j)\}_{j=1}^n$  
   \ENDIF
   \STATE \textbf{return} {$\{\tau^{A\rightarrow B}_i\}, \{\tau^{B\rightarrow A}_j\}$}
\end{algorithmic}
\end{algorithm}

\begin{algorithm}[t]
   \caption{\update}
   \label{alg:update}
\begin{algorithmic}[1]
   \STATE {\bfseries Input:} Current policy $\pi$, sampled trajectories from $\pi$, $\{\tau_i\}_{i=1}^m$ and demonstrations $\{\tau'_j\}_{j=1}^n$ 
   \STATE Form a loss function $\mathcal{L}(\pi) = $
   $
   \begin{cases}
   -\sum\nolimits_{i=1}^m r(\tau_i) + \lambda C(\pi, \{\tau'_j\}_{j=1}^n), \text{ RL with IL loss} \\
   \lambda C(\pi, \{\tau'_j\}_{j=1}^n), \text{ IL loss only}
   \end{cases}$
   \STATE Update $\pi \leftarrow \pi - \alpha \nabla \mathcal{L}(\pi)$
\end{algorithmic}
\end{algorithm}
\begin{algorithm}[t]
   \caption{\exchange: Special Case}
   \label{alg:exchange:special}
\begin{algorithmic}[1]
   \STATE {\bfseries Input:} Trajectories $\{\tau^A_i\}_{i=1}^m$ and $\{\tau^B_j\}_{j=1}^n$ 
   \STATE $D^{A\rightarrow B} = \interactive(\{f_{B\rightarrow A}(\tau^B_j)\}_{j=1}^n, \pi^A)$
   \STATE $D^{B\rightarrow A} = \interactive(\{f_{A\rightarrow B}(\tau^A_i)\}_{i=1}^m, \pi^B)$
   \STATE \textbf{return} $D^{A\rightarrow B}, D^{B\rightarrow A}$
\end{algorithmic}
\end{algorithm}

\begin{algorithm}[t]
   \caption{\interactive}
   \label{alg:exchange:interactive}
\begin{algorithmic}[1]
   \STATE {\bfseries Input:} Trajectories $\{\tau_i\}_{i=1}^m$, query policy $\pi$ 
   \STATE $D = \emptyset$
   \FOR{$i\leftarrow 1 \text{ to } m$} 
   \FOR{each state $s \in \tau_i$}
   \STATE $D \leftarrow D\cup\{(s, \pi(s))\}$
   \ENDFOR
   \ENDFOR
   \STATE \textbf{return} $D$
\end{algorithmic}
\end{algorithm}
\subsection{SPECIAL CASE: SHARED ACTION SPACE}
\label{sec:algo:special}
For the special case with a shared action space, we can collect more informative feedback beyond the trajectory level. Instead, we collect interactive state-level feedback, as is popular in imitation learning algorithms such as DAgger \citep{ross2011reduction} and related approaches \cite{sun2017deeply,daume2009search,ross2010efficient,song2018learning,he2014learning}. Specifically, we can use Algorithms \ref{alg:exchange:special} \& \ref{alg:exchange:interactive} to exchange actions in a state-coupled manner. This process is depicted in Figure \ref{fig:share_action}, where $\pi^A$'s visited states, $s_0^A$ and $s_1^A$, are mapped to $s^B_0$ and $s^B_1$, resulting in receiving $\pi^B$'s actions, $a_0^B$ and $a_1^B$, in the exchange. 

Unlike the general case where information exchange is asymmetric, as Theorem \ref{theorem:special} indicates, we aim to minimize policy disagreement. Both policies are simultaneously optimizing this objective, which requires both directions of information exchange (Lines 2-3 in Algorithm \ref{alg:exchange:special}).
The update step (Algorithm \ref{alg:update}) is the same as the general case.

\section{EXPERIMENTS}
We now present empirical results on both the special and general cases of \copier. We demonstrate the generality of our approach by applying three distinct combinations of policy co-training: reinforcement learning on both views (Section \ref{sec:sec:control}), reinforcement learning on one view and imitation learning on the other (Section \ref{sec:sec:mvc}), and imitation learning on both views (Section \ref{sec:sec:psulu}). Furthermore, our experiments on combinatorial optimization (Sections \ref{sec:sec:mvc} \& \ref{sec:sec:psulu}) demonstrate significant improvements over strong learning-based baselines as well as commercial solvers, and thus showcase the practicality of our approach.  More details about the experiment setup can be found in the appendix.
\label{sec:exp}

\subsection{DISCRETE \& CONTINUOUS CONTROL: SPECIAL CASE WITH RL+RL}
\label{sec:sec:control}
\begin{figure*}[t]
  \centering
  \begin{subfigure}[b]{.32\textwidth}
    \centering
    {
      \captionsetup{width=.9\linewidth}
      \includegraphics[trim={0pt 0pt 0pt 0pt}, width=\textwidth]{./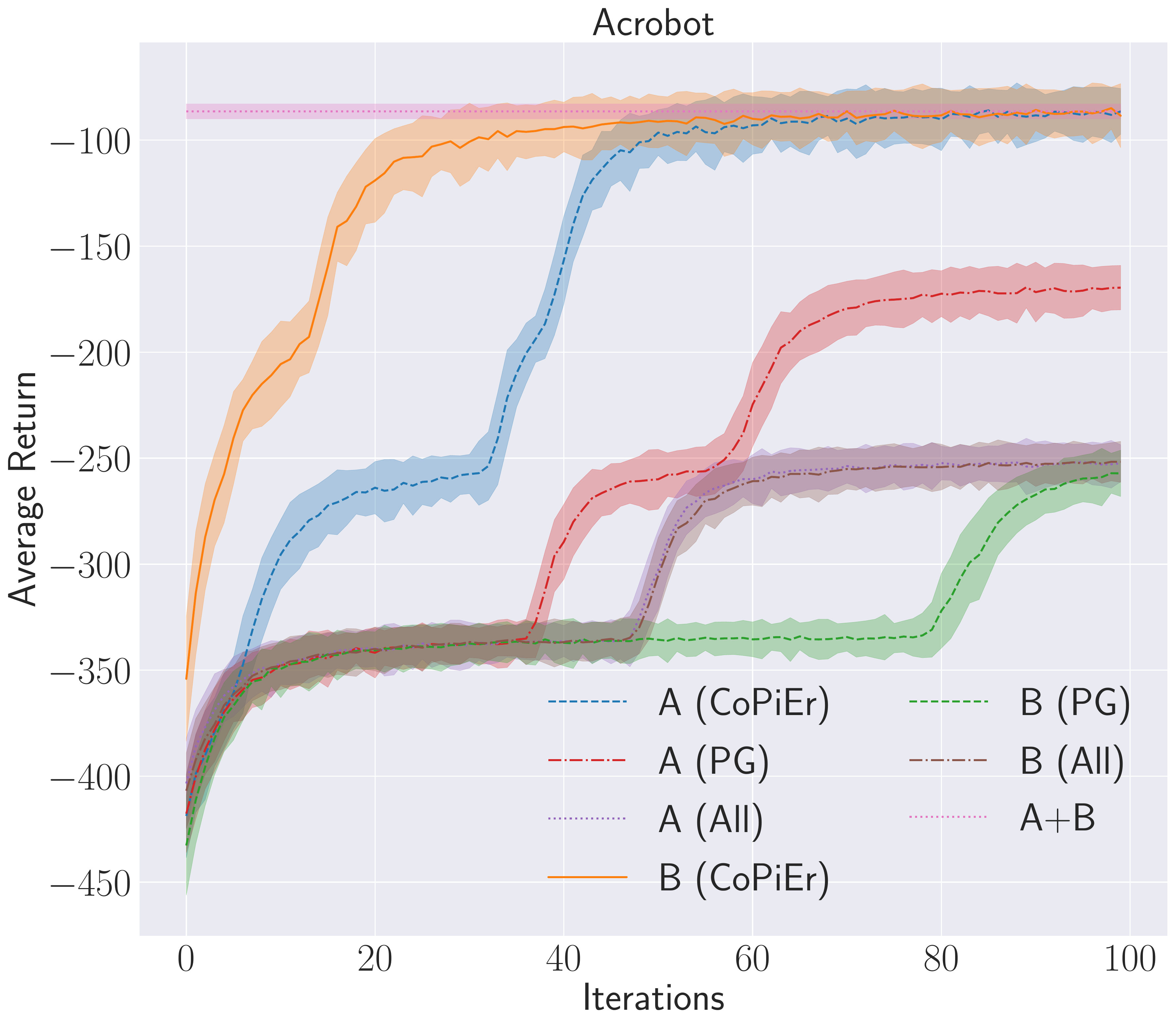}
      \caption{Acrobot Swing-up. A denotes removing the first coordinate in the state vector and B removing the second coordinate.}
      \label{fig:exp:acrobot}
    }
  \end{subfigure}
  \begin{subfigure}[b]{.32\textwidth}
    \centering
    {
            \captionsetup{width=.9\linewidth}
      \includegraphics[trim={0pt 0pt 0pt 0pt}, width=\textwidth]{./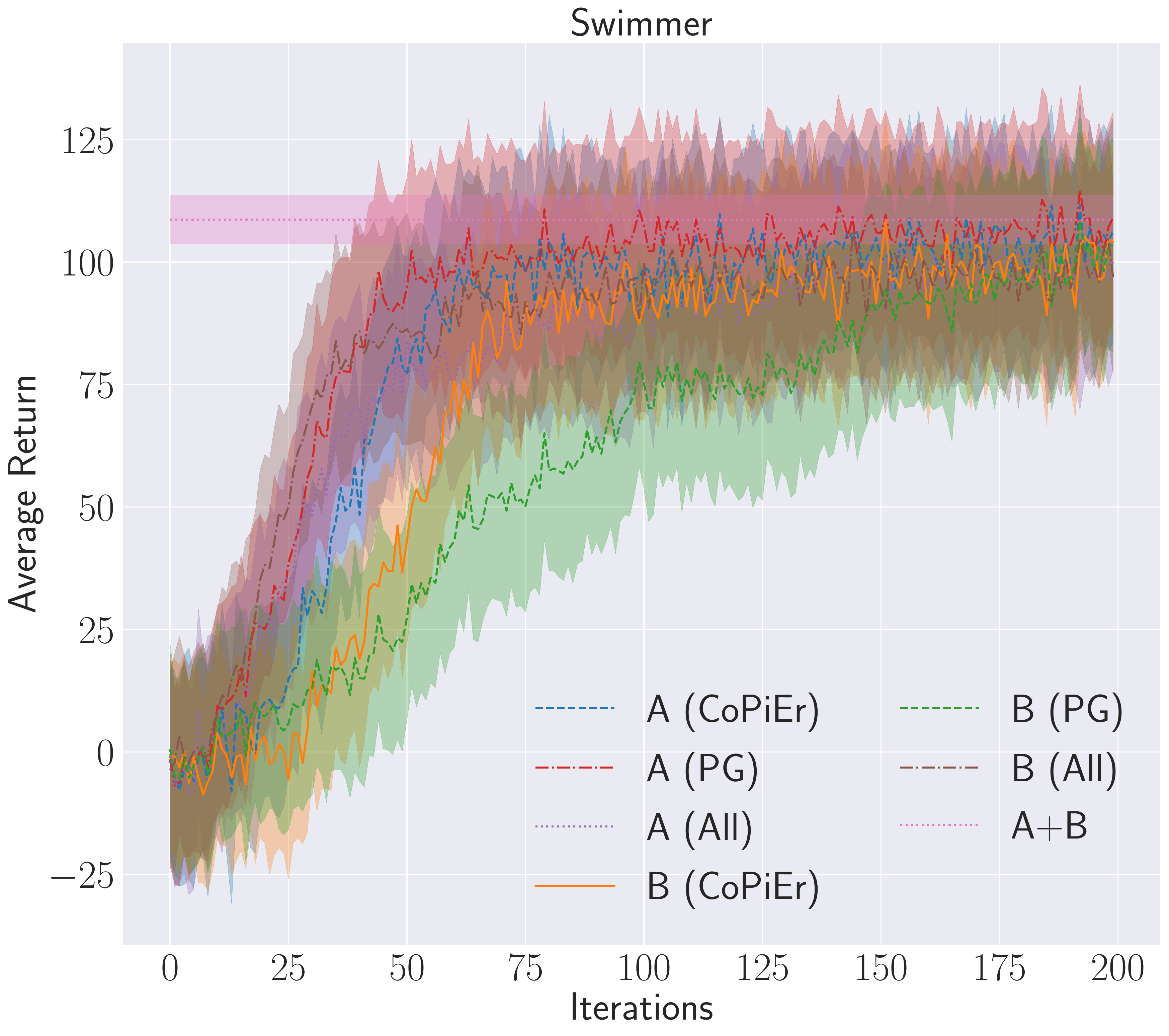}
      \caption{Swimmer. A denotes removing all even index coordinates in the state vector and B removing all odd index ones.}
      \label{fig:exp:swimmer}
    }
  \end{subfigure}
  \begin{subfigure}[b]{.32\textwidth}
    \centering
    {
          \captionsetup{width=.9\linewidth}
      \includegraphics[trim={0pt 0pt 0pt 0pt}, width=\textwidth]{./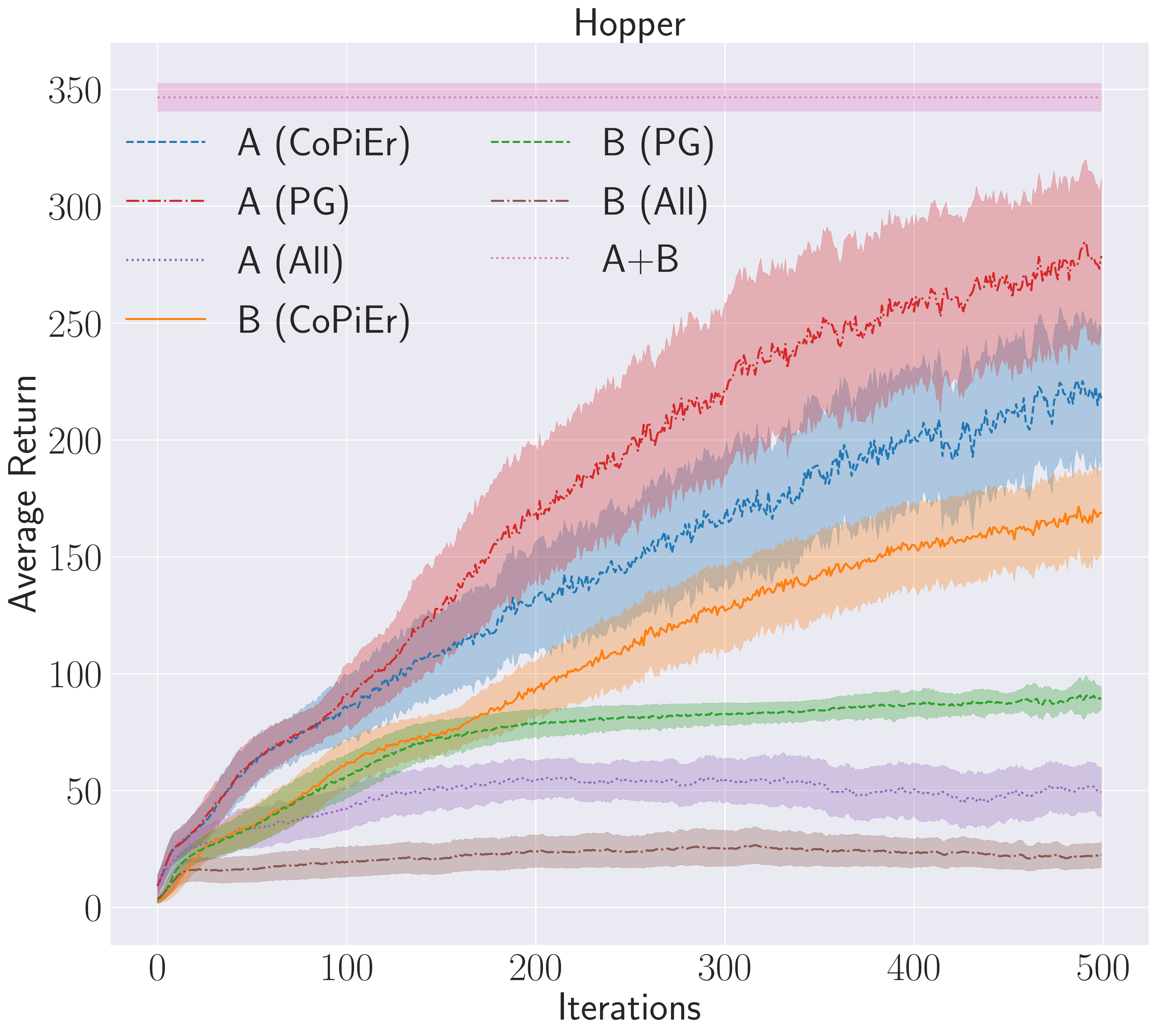}
      \caption{Hopper. A denotes removing all even index coordinates in the state vector and B removing all odd index ones.}
      \label{fig:exp:hopper}
    }
  \end{subfigure}
    \caption{Discrete \& continuous control tasks. Experiment results are across 5 random seeded runs. Shaded area indicates $\pm 1$ standard deviation.}
    \label{fig:exp:control}
\end{figure*}
\textbf{Setup.}
We conduct experiments on discrete and continuous control tasks with OpenAI Gym \citep{brockman2016openai} and Mujoco physical engine \citep{todorov2012mujoco}. We use the garage repository \citep{duan2016benchmarking} to run reinforcement learning for both views.

\textbf{Two Views and Features.} For each environment, states are represented by feature vectors, typically capturing location, velocity and acceleration. We create two views by removing different subsets of features from the complete feature set. Note that both views have the same underlying action space as the original MDP, so it is the special case covered in Section \ref{sec:algo:special}. We use interactive feedback for policy optimization.

\textbf{Policy Class.} We use a feed-forward neural network with two hidden layers (64 \& 32 units) and tanh activations as the policy class. For discrete actions, $\pi(s)$ outputs a soft-max distribution. For continuous actions, $\pi(s)$ outputs a (multivariate) Gaussian. For policy update, we use Policy Gradient \citep{sutton2000policy} with a linear baseline function \citep{greensmith2004variance} and define the loss function $C$ in Algorithm \ref{alg:update} to be the KL-divergence between output action distributions.

\textbf{Methods Compared.} We compare with single view policy gradient, labelled as ``A (PG)" and ``B (PG)", and with a policy trained on the union of the two views but test on two views separately, labelled as ``A (All)" and ``B (All)". We also establish an upper bound on performance by training a model without view splitting (``A+B"). Each method uses the same total number of samples (i.e., \copier uses half per view).

\textbf{Results.}  Figure \ref{fig:exp:control} shows the results. \copier is able to converge to better or comparable solutions in almost all cases except for view A in Hopper. The poor performance in Hopper could be due to the disagreement between the two policies not shrinking enough to make Theorem \ref{theorem:special} meaningful. As a comparison, at end of the training, the average KL-divergence for the two policies is about 2 for Hopper, compared with 0.23 for Swimmer and 0.008 for Acrobot. One possible cause for such large disagreement is that the two views have significance differences in difficulty for learning, which is the case for Hopper by noticing A (PG) and B (PG) have a difference in returns of about 190. 


\subsection{MINIMUM VERTEX COVER: GENERAL CASE WITH RL+IL}
\label{sec:sec:mvc}
\begin{figure}[t]
    \centering
    \includegraphics[trim={0pt 0pt 0pt 0pt}, width=0.485\textwidth]{./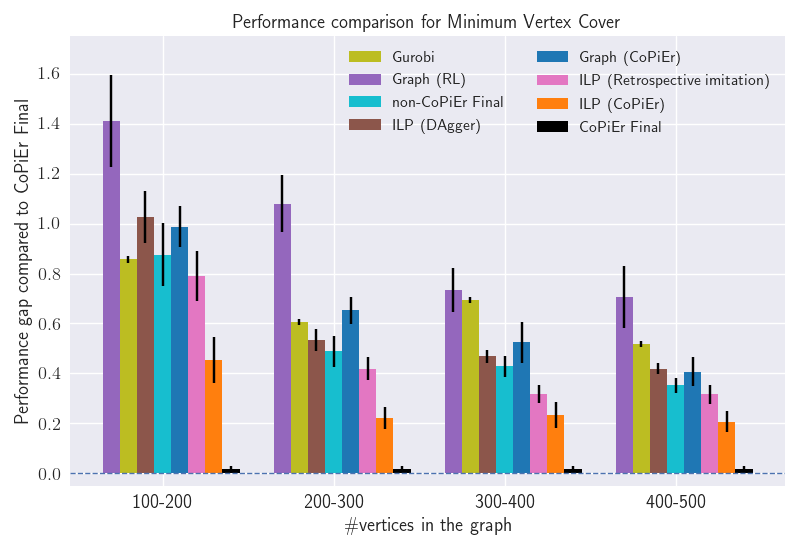}
    \vspace{-0.2in}
    \caption{Comparison of \copier with other learning-based baselines and a commercial solver, Gurobi. The $y$-axis measure relative gaps of various methods compared with \copier Final. \copier Final outperforms all the baselines. Notably, the gaps are significant because getting optimizing over large graphs is very challenging.}
    \label{fig:exp:mvc}
\end{figure}
\textbf{Setup.}
We now consider the challenging combinatorial optimization problem of minimum vertex cover (MVC). We use 150 randomly generated Erd\H{o}s-R\'enyi \citep{erdHos1960evolution} graph instances for each scale, with scales ranging \{100-200, 200-300, 300-400, 400-500\} vertices. For training, we use 75 instances, which we partition into 15 labeled and 60 unlabeled instances. We use the best solution found by Gurobi within 1 hour as the expert solution for the labeled set to bootstrap imitation learning. For each scale, we use 30 held-out graph instances for validation, and we report the performance on 45 test graph instances.

\textbf{Views and Features.} The two views are the graphs themselves and integer linear programs constructed from the graphs. For the graph view, we use DQN-based reinforcement learning \citep{dai2017learning} to learn a sequential vertex selection policy. We use {\tt structure2vec} \citep{Dai2016} to compute graph embeddings to use as state representations. For the ILP, we use imitation learning \citep{he2014learning} to learn node selection policy for branch-and-bound search. A node selection policy determines which node to explore next in the current branch-and-bound search tree. We use node-specific features (e.g., LP relaxation lower bound and objective value) and tree-specific features (e.g., integrality gap, and global lower and upper bounds) as our state representations. Vertex selection in graphs and node selection in branch-and-bound are different. So we use the general case algorithm in Section \ref{sec:algo:general}.

\textbf{Policy Class.}
For the graph view, our policy class is similar to \citep{dai2017learning}. In order to perform end-to-end learning of the parameters with labeled data exchanged between the two views, we use DQN \citep{mnih2013playing} with supervised losses \citep{Hester2018} to learn to imitate better demonstrations from the ILP view. For all our experiments, we determined the regularizer for the supervised losses and other parameters through cross-validation on the smallest scale (100-200 vertices). 
The graph view models are pre-trained with the labeled set using behavior cloning. We use the same number of training iterations for all the methods.

For the ILP view, our policy class consists of a node ranking model that prioritizes which node to visit next. We use RankNet \citep{Burges1998} as the ranking model, instantiated using a 2-layer neural network with ReLU as activation functions. We implement our approach for the ILP view within the SCIP \citep{Achterberg2009} integer programming framework.

\textbf{Methods Compared.}
At test time, when a new graph is given, we run both policies and return the better solution. We term this practical version ``\copier Final" and measure other policies' performance against it. We compare with single view learning baselines. For the graph view, we compare with RL-based policy learning over graphs \citep{dai2017learning}, labelled as ``Graph (RL)". And for the ILP view, we compare with imitation learning \citep{he2014learning} ``ILP (DAgger)", retrospective imitation \citep{song2018learning} ``ILP (Retrospective Imitation)" and a commercial solver Gurobi \citep{gurobi}.  We combine ``Graph (RL)" and ``ILP (DAgger)" as non-CoPiEr (Final) by returning the better solution of the two. We also show the performance of the two policies in \copier as standalone policies instead of combining them, labelled ``Graph (\copier)" and ``ILP (\copier)". ILP methods are limited by the same node budget in branch-and-bound trees.

\textbf{Results.}
Figure \ref{fig:exp:mvc} shows the results. We see that \copier Final outperforms all baselines as well as Gurobi. Interestingly, it also performs much better than either standalone \copier policies, which suggests that Graph (\copier) is better for some instances while ILP (\copier) is better on others. This finding validates combining the two views to maximize the benefits from both. For the exact numbers on the final performance, please refer to Appendix \ref{sec:control:tabular}.

\subsection{RISK-AWARE PATH PLANNING: GENERAL CASE WITH IL+IL}
\label{sec:sec:psulu}

\textbf{Setup.} We finally consider a practical application of risk-aware path planning \citep{ono2008efficient}. Given a start point, a goal point, a set of polygonal obstacles, and an upper bound of the probability of failure (risk bound), we must find a path, represented by a sequence of way points, that minimizes cost while limiting the probability of collision to within the risk bound. Details on the data generation can be found in the Appendix ~\ref{sec:data_gen_psulu}. We report the performance evaluations on 50 test instances.

\textbf{Views and Features.} This problem can be formulated into a mixed integer linear program (MILP) as well as a quadratically constrained quadratic program (QCQP), both of which can be solved using branch-and-bound  \citep{land2010automatic, linderoth2005simplicial}. For each view, we learn a node selection policy for branch-and-bound via imitation learning. Feature representations are similar to ILP view in MVC experiment (Section \ref{sec:sec:mvc}). For the QCQP view, we use the state variables bounds along the trace for each node from the root in the branch and bound tree as an additional feature.
Although the search framework is the same, because of the different nature of the optimization problem formulations, the state and action space are incompatible, and so we use the general case of \copier. A pictorial representation of the two views is presented in Appendix \ref{sec:pic_rep_psulu}. 

\textbf{Policy Class.}
The policy class for both MILP and QCQP views is similar to that of ILP view in MVC (Section \ref{sec:sec:mvc}), and we learn node ranking models.

\textbf{Methods Compared.}
Similar to MVC experiment, we compare other methods with ``\copier Final" which returns the better solution of the two. We use single view learning baselines, specifically those based on imitation learning \citep{he2014learning}, ``QCQP (DAgger)" and ``MILP(DAgger)", and on retrospective imitation \citep{song2018learning}, ``QCQP (Retrospective Imitation)" and ``MILP (Retrospective Imitation)". Two versions of non-CoPiEr Final are presented, based on DAgger and Retrospective Imitation, respectively. Gurobi is also used to solve MILPs but it is not able to solve the QCQPs because they are non-convex.

\begin{figure}[t]
    \centering
    \includegraphics[trim={0pt 0pt 0pt 0pt}, width=0.485\textwidth]{./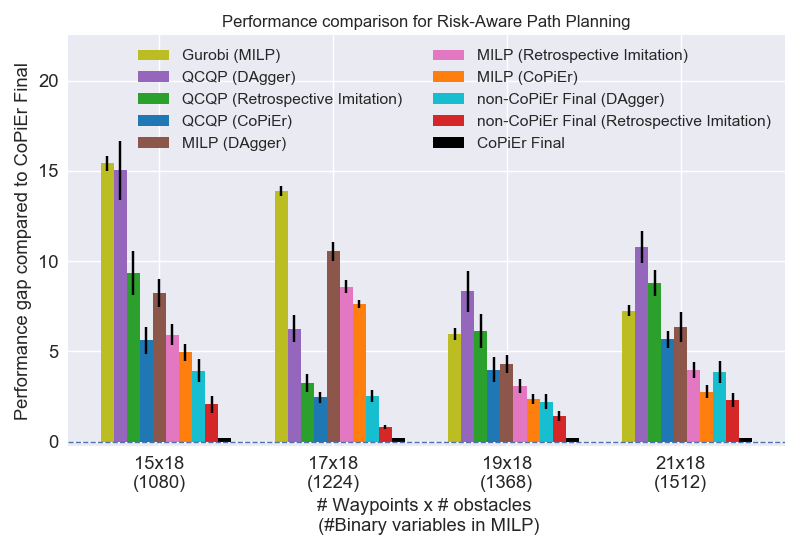}
    \vspace{-0.15in}
    \caption{Comparison of \copier with other learning-based baselines and a commercial solver, Gurobi. The $y$-axis measure relative gaps of various methods compared with \copier Final. \copier Final outperforms all the baselines. Notably, the scale of problems as measured by the number of integer variables far exceed previous state-of-the-art method \citep{song2018learning}.}
    \label{fig:exp:psulu}
\end{figure}

\textbf{Results.} Figure \ref{fig:exp:psulu} shows the results. Like in MVC, we again see that \copier Final outperforms  baselines as well as Gurobi. We also observe a similar benefit of aggregating both policies. The effectiveness of \copier enables solving much larger problems than considered in previous work \citep{song2018learning} (560 vs 1512  binary variables).

\section{CONCLUSION \& FUTURE WORK}
\label{sec:conclusion}
We have presented \copier (Co-training for Policy Learning), a general framework for policy learning for sequential decision making tasks with two representations. Our theoretical analyses and algorithm design cover both the general case as well as a special case with shared action spaces. 
Our approach is compatible with both reinforcement learning and imitation learning as subroutines.
We evaluated on a variety of settings, including control and  combinatorial optimization. Our results on showcase the generality of our framework and significant improvements over numerous baselines.

There are many interesting directions for future work.
On the theory front, directions include weakening assumptions such as  conditional independence, or extending to more than two views. On the application front, algorithms such as \copier can potentially improve performance in a wide range of  robotic and other autonomous systems that utilize  different sensors and image data. 

\smallskip
\textbf{Acknowledgments.} The work was funded in part by NSF awards \#1637598 \& \#1645832, and support from Raytheon and Northrop Grumman. This research was also conducted in part at the Jet Propulsion Lab, California Insitute of Technology under a contract with the National Aeronautics and Space Administration.

\clearpage
\begin{small}
\bibliographystyle{plain}
\bibliography{reference}
\end{small}
\clearpage
\section{APPENDIX}
\label{sec:appendix}

\subsection{PROOFS}
\label{sec:sec:proof}
\textbf{Proof for Proposition \ref{proposition:divergence}:}
\begin{proof}
We show that $\max\nolimits_{s} D_{JS}(\pi^B(s)\|\pi^A(s))$ is well-defined for an MDP $\mdp$ with two representations $\mdp^A$ and $\mdp^B$. From Theorem \ref{theorem:occupancy}, we know the distribution $\pi(s)$ can be written with respect to its occupancy measure $\rho_{\pi}$. It is sufficient to show that we can map occupancy measures of $\pi^A$ and $\pi^B$ to a common MDP. By the definition of an occupancy measure, 
\begin{align*}
\rho_{\pi}(s, a) &= \mathbb{P}(\pi(s)=a)\sum\nolimits_{i=0}^{\infty}\gamma^i \mathbb{P}(s_i=s|\pi) \\
                         &= \expect{\tau=(s_0, a_0, \cdots, s_n)\sim \pi}{\sum\nolimits_{i=0}^n \gamma^i \textbf{1}((s_i, a_i) = (s, a))} 
\end{align*}
that is to say, the occupancy measure is the expected discounted count of a state-action pair to appear in all possible trajectories. Since we have trajectory mappings between $\mdp^A$ and $\mdp^B$, we can convert an occupancy measure in $\mdp^A$ to one in $\mdp^B$ by mapping each trajectory and perform the count in the new MDP representation. Formally, the occupancy measure $\rho^B_{\pi^B}$ of $\pi^B$ in $\mdp^B$ can be mapped to an occupancy measure in $\mdp^A$ by
\begin{align*}
    &\rho^A_{\pi^B}(s, a) \\
    &= \expect{\substack{\tau^B\sim \pi^B, \\ f_{B\rightarrow A}(\tau^B)=(s_0, a_0, \cdots, s_n)}}{\sum\nolimits_{i=0}^n \gamma^i \textbf{1}((s_i, a_i) = (s, a))} 
\end{align*}
Following from this, we can compute $D_{JS}(\pi^B(s)\|\pi^A(s))$ using any $s$ in $\mdp^A$. And the maximum is defined. In the definition, there is a choice whether to map $\pi^A$'s occupancy measure to $\mdp^B$ or $\pi^B$'s to $\mdp^A$. Though both approaches lead to a valid definition, we use the definition that for $D_{JS}(\cdot\|\cdot)$, we always map the representation in the first argument to that of the second argument. It is preferable to the other one because in Theorem \ref{theorem:general}, we want to optimize 
\begin{equation*}
    J(\pi'^A) \ge J_{\pi^A}(\pi'^A) - \frac{2\gamma^A(4\beta^B_{\dist_2} \epsilon^B_{\dist_2} + \alpha^A_{\dist} \epsilon^{A}_{\dist})}{(1-\gamma^A)^2} + \delta_2
\end{equation*}
by optimizing 
\begin{equation*}
\beta^B_{\dist_2}=\expect{\mdp\sim\dist_2}{\max\nolimits_{s} D_{JS}(\pi^B(s)\|\pi^A(s))}    
\end{equation*}
usually via computing the gradient of $\beta^B_{\dist_2}$ w.r.t. $\pi^A$. If we use $f_{A\rightarrow B}$ to map from $\mdp^A$ to $\mdp^B$, the gradient will involve a complex composition of $f_{A\rightarrow B}$ and $\pi^A$, which is undesirable. 
\end{proof}

To prove Theorem \ref{theorem:general}, we need to use a policy improvement result for a single MDP (a modified version of Theorem 1 in \citep{kang2018policy}).
\begin{theorem}
Assume for an MDP $\mdp$, an expert policy $\pi_E$ have a higer advantage of over a policy $\pi$ with a margin, i.e.,  $\eta(\pi_E, \mathcal{M}) - \eta(\pi, \mathcal{M}) \ge \delta$
Define
\begin{align*}
&\alpha = \max\nolimits_{s} D_{KL}(\pi'(s)\|\pi(s)) \\
&\beta = \max\nolimits_{s} D_{JS}(\pi'(s)\|\pi_E(s)) \\
&\epsilon_{\pi_E} = \max\nolimits_{s, a} |A_{\pi_E}(s, a)| \\
&\epsilon_{\pi} = \max\nolimits_{s, a} |A_{\pi}(s, a)|
\end{align*}
then $\eta(\pi', \mdp) \ge \eta_{\pi}(\pi', \mdp) - \frac{2\gamma(4\beta\epsilon_{\pi_E}+\alpha\epsilon_{\pi})}{(1-\gamma)^2} + \delta$
\end{theorem}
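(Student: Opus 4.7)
The plan is to apply the Kakade--Langford advantage identity twice and combine the two expansions. The first, anchored at $\pi$, produces the surrogate $\eta_\pi(\pi',\mdp)$ together with the standard TRPO-type penalty $\alpha\epsilon_\pi/(1-\gamma)^{2}$ from the $\rho_\pi\!\to\!\rho_{\pi'}$ state-distribution mismatch. The second, anchored at $\pi_E$, injects the bonus $+\delta$ by invoking the margin hypothesis while paying a penalty of the form $\beta\epsilon_{\pi_E}/(1-\gamma)^{2}$ for the $\pi_E\!\to\!\pi'$ action-distribution mismatch. The two expansions are then glued together by expressing $\eta(\pi,\mdp)$ in terms of $\eta_\pi(\pi',\mdp)$ using the definition of the surrogate.

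\textbf{Key steps.} I would begin with the identity $\eta(\pi',\mdp)=\eta(\pi_E,\mdp)+\sum_{t}\gamma^{t}\mathbb{E}_{\tau\sim\pi'}[A_{\pi_E}(s_t,a_t)]$ and substitute $\eta(\pi_E,\mdp)\ge\eta(\pi,\mdp)+\delta$ to surface the bonus $\delta$. Since $\mathbb{E}_{a\sim\pi_E(s)}[A_{\pi_E}(s,a)]=0$ for every $s$, the correction becomes $\sum_{t}\gamma^{t}\mathbb{E}_{s\sim\rho_{\pi'}}\bigl[\sum_{a}(\pi'(a|s)-\pi_E(a|s))A_{\pi_E}(s,a)\bigr]$, which I would bound pointwise by $2\epsilon_{\pi_E}D_{TV}(\pi'(s),\pi_E(s))$, sum the geometric series, and convert $D_{TV}$ to $D_{JS}$ via the standard inequality to obtain the $4\beta\epsilon_{\pi_E}/(1-\gamma)^{2}$ contribution. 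To tie the bound to $\eta_\pi(\pi',\mdp)$, I would rewrite $\eta(\pi,\mdp)=\eta_\pi(\pi',\mdp)-\sum_{t}\gamma^{t}\mathbb{E}_{s\sim\rho_\pi,a\sim\pi'}[A_\pi(s,a)]$; the leftover $\rho_{\pi'}\!-\!\rho_\pi$ mismatch term is then exactly the quantity controlled by the Schulman--Kakade TRPO lemma, giving the $\alpha\epsilon_\pi/(1-\gamma)^{2}$ contribution after converting KL to TV (via Pinsker, or the direct linear bound used by Kang et al.). Summing the two contributions recovers the stated inequality.

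\textbf{Anticipated main obstacle.} The structural steps are standard, but the technical difficulty lies in bookkeeping three distinct state-visitation distributions ($\rho_\pi$, $\rho_{\pi'}$, $\rho_{\pi_E}$) and three divergence measures ($D_{KL}$, $D_{JS}$, $D_{TV}$) that play different roles in different parts of the argument. At each step one must pick the intermediate inequality carefully so that the final bound stays \emph{linear} in both $\alpha$ and $\beta$ (rather than quadratic, as a naive application of Pinsker would produce) and recovers precisely the coefficients $4\beta\epsilon_{\pi_E}$ and $\alpha\epsilon_\pi$ appearing in the statement; verifying those constants, rather than the structural form of the argument, is where I expect most of the calculation to live.
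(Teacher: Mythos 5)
Your high-level skeleton (expand the performance once around $\pi$ and once around $\pi_E$, and inject $\delta$ through the margin hypothesis) is in the spirit of the result you are re-deriving, but the specific bounding route you describe cannot produce the stated bound, and the failure is structural rather than a matter of constants. For the $\pi_E$-anchored expansion you propose to bound $\sum_t\gamma^t\mathbb{E}_{s\sim\rho_{\pi'}}\mathbb{E}_{a\sim\pi'}[A_{\pi_E}(s,a)]$ pointwise by $2\epsilon_{\pi_E}D_{TV}(\pi'(s),\pi_E(s))$ and sum the geometric series; this yields a penalty of order $\epsilon_{\pi_E}D^{\max}_{TV}/(1-\gamma)$, i.e.\ \emph{linear} in total variation with a single power of $(1-\gamma)^{-1}$. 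The only conversion to Jensen--Shannon available is $D_{TV}\le\sqrt{2D_{JS}}$, so you obtain a penalty of order $\sqrt{\beta}/(1-\gamma)$ rather than the claimed $\gamma\beta/(1-\gamma)^2$; for small $\beta$ (the regime the theorem is meant for) this is strictly weaker and does not imply the statement. The same issue afflicts your $\alpha$ term: after substituting $\eta(\pi,\mdp)=\eta_\pi(\pi',\mdp)-\sum_t\gamma^t\mathbb{E}_{s\sim\rho_\pi,a\sim\pi'}[A_\pi(s,a)]$, the leftover term is an \emph{action}-distribution mismatch evaluated under $\rho_\pi$, not the $\rho_{\pi'}$-versus-$\rho_\pi$ \emph{state}-visitation mismatch $\eta(\pi',\mdp)-\eta_\pi(\pi',\mdp)$ that the TRPO lemma controls, so the lemma does not apply to the term you actually have, and bounding the term you do have again gives only $\epsilon_\pi D^{\max}_{TV}/(1-\gamma)=O(\sqrt{\alpha})/(1-\gamma)$. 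The missing ingredient is the TRPO coupling argument applied to the state-visitation mismatch between the true return and the $\pi$- and $\pi_E$-anchored surrogates: that mismatch is bounded by $\frac{4\gamma\epsilon\,(D^{\max}_{TV})^2}{(1-\gamma)^2}$, \emph{quadratic} in $D_{TV}$, and only then does a Pinsker-type inequality convert $(D^{\max}_{TV})^2$ linearly into $D^{\max}_{KL}$ or $D^{\max}_{JS}$, producing the coefficients $2\gamma\alpha\epsilon_\pi/(1-\gamma)^2$ and $2\gamma\cdot 4\beta\epsilon_{\pi_E}/(1-\gamma)^2$ in the statement.

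For contrast, the paper does not re-derive any of this machinery. Its proof consists of citing Theorem 1 of Kang et al.\ (2018) and observing that the per-state margin assumption used there, $\mathbb{E}_{a_E\sim\pi_E(s)}[A_{\pi}(s,a_E)]\ge\delta'$ for every $s$, is stronger than necessary: by the performance difference lemma it implies the aggregate margin $\eta(\pi_E,\mdp)-\eta(\pi,\mdp)\ge\delta'/(1-\gamma)$, so the weaker aggregate assumption suffices and the original bound carries over with a relabelled $\delta$. If you want a self-contained argument you must reproduce the coupling lemma for both anchor policies; otherwise the honest proof is the paper's short reduction.
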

\begin{proof}
The only difference from the original theorem is that the original assumes $\expect{a_E\sim\pi_E(s), a\sim\pi(s)}{A_{\pi}(s, a_E) - A_{\pi}(s, a)} \ge \delta' > 0$ for every state $s$. It is a stronger assumption which is not needed in their analysis. Notice that the advantage of a policy over itself is zero, i.e., $\expect{a\sim\pi(s)}{A_{\pi}(s, a)} = 0$ for every $s$, so the margin assumption simplifies to $\expect{a_E\sim\pi_E(s)}{A_{\pi}(s, a_E)} \ge \delta'$.

By the policy advantage formula,  
\begin{align*}
    \eta(\pi_E, \mathcal{M}) - \eta(\pi, \mathcal{M}) &= \mathbb{E}_{\tau\sim\pi_E}[\sum\nolimits_{i=0}^{\infty}\gamma^i A_{\pi}(s_i, a_i)] \\
    &= \mathbb{E}_{s_i\sim\rho_{\pi_E}}\expect{a_i\sim\pi_E(s_i)}{\sum\nolimits_{i=0}^{\infty}\gamma^i A_{\pi}(s_i, a_i)} \\
    &\ge \mathbb{E}_{s_i\sim\rho_{\pi_E}} [\delta' \sum\nolimits_{i=0}^{\infty} \gamma^i] \\
    &= \frac{\delta'}{1-\gamma}
\end{align*}
So an assumption on per-state advantage translates to a overall advantage. Thus we can make this weaker assumption which is also more intuitive and the original statement still holds with a different $\delta$ term.
\end{proof}

\textbf{Proof of Theorem \ref{theorem:general}:}
\begin{proof}
Theorem \ref{theorem:general} is a distributional extension to the theorem above. For $\mdp\sim\dist_2$, let $\delta_{\mdp} = \eta(\pi^B, \mdp^B) - \eta(\pi^A, \mdp^A)$.
\begin{align*}
    &J(\pi'^A) \\
              &= \expect{\mdp\sim\dist}{\eta(\pi'^A, \mdp^A)} \\
              &= \expect{\mdp\sim\dist_1}{\eta(\pi'^A, \mdp^A)} + \expect{\mdp\sim\dist_2}{\eta(\pi'^A, \mdp^A)} \\
              &\ge \expect{\mdp\sim\dist_1}{\eta(\pi'^A, \mdp^A)} + \\
              &\expect{\mdp\sim\dist_2}{\eta_{\pi^A}(\pi'^A, \mdp^A) - \frac{2\gamma^A(4\beta\epsilon_{\pi^B}+\alpha\epsilon_{\pi^A})}{(1-\gamma^A)^2} + 
              \delta_{\mdp}} \\
              &\ge \expect{\mdp\sim\dist_1}{\eta_{\pi^A(}(\pi'^A, \mdp^A) - \frac{2\gamma^A\alpha\epsilon_{\pi^A}}{(1-\gamma^A)^2}} + \\
              &\expect{\mdp\sim\dist_2}{\eta_{\pi^A}(\pi'^A, \mdp^A) - \frac{2\gamma^A(4\beta\epsilon_{\pi^B}+\alpha\epsilon_{\pi^A})}{(1-\gamma^A)^2} + 
              \delta_{\mdp}} \\
              &= \expect{\mdp\sim\dist}{\eta_{\pi^A}(\pi'^A, \mdp^A)} - \expect{\mdp\sim\dist}{\frac{2\gamma^A\alpha\epsilon_{\pi^A}}{(1-\gamma^A)^2}} -\\
              &\expect{\mdp\sim\dist_2}{\frac{2\gamma^A\cdot4\beta\epsilon_{\pi^B}}{(1-\gamma^A)^2}} + \expect{\mdp\sim\dist_2}{\delta_{\mdp}} \\
              &\ge J_{\pi^A}(\pi'^A) - \frac{2\gamma^A(4\beta^B_{\dist_2} \epsilon^B_{\dist_2} + \alpha^A_{\dist} \epsilon^{A}_{\dist})}{(1-\gamma^A)^2} + \delta_2
\end{align*}
The derivation for $J(\pi'^B)$ is the same.
\end{proof}

Finally, we provide the proof for Theorem \ref{theorem:special}. We first quantify the performance gap between a policy $\pi$ and an optimal policy $\pi^*$. For a policy that is able to achieve $\epsilon$ $0-1$ loss, $\ell(s, \pi) = \textbf{1}(\pi(s)\ne \pi^*(s))$, measured against $\pi^*$'s action choices under its own state distributions, then we can bound the performance gap. Let $Q_t^{\pi'}(s, \pi)$ denote the $t$-step cost of executing $\pi$ in initial state $s$ and then following policy $\pi'$

\begin{theorem} 
\label{theorem:gap}
(Theorem 2.2 from \citep{ross2011reduction}, adpated to our notations) Let $\pi$ be such taht $\expect{s\sim \rho_{\pi}}{\ell(s, \pi)}=\epsilon$, and $Q^{\pi^*}_{T-t+1}(s, \pi^*) - Q^{\pi^*}_{T-t+1}(s, a) \le u$ for all action $a, t\in\{1, 2, \cdots, T\}$, then $\eta(\pi, \mdp) \ge \eta(\pi^*, \mdp) - uT\epsilon$.
\end{theorem}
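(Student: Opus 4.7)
The plan is to prove the bound via the classical hybrid-policy (telescoping) argument used in the DAgger analysis, reducing the multi-step performance gap to a sum of single-step deviation costs that are each controlled by the assumption on $Q^{\pi^*}_{T-t+1}$.

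First I would define, for each $k \in \{1,\dots,T+1\}$, a hybrid policy $\hat\pi_k$ that executes $\pi$ at steps $1,\dots,k-1$ and then switches to $\pi^*$ at steps $k,\dots,T$. By construction $\hat\pi_1 = \pi^*$ and $\hat\pi_{T+1} = \pi$, so the total gap telescopes:
\begin{equation*}
\eta(\pi,\mdp) - \eta(\pi^*,\mdp) \;=\; \sum_{k=1}^{T}\bigl[\eta(\hat\pi_{k+1},\mdp) - \eta(\hat\pi_k,\mdp)\bigr].
\end{equation*}
The two consecutive hybrids $\hat\pi_k$ and $\hat\pi_{k+1}$ are identical except at step $k$: both have visited the same trajectory prefix (under $\pi$) and will both execute $\pi^*$ afterwards; they differ only in the single action taken at step $k$, which is $\pi^*(s_k)$ versus $\pi(s_k)$. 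This is precisely where the non-stationary $Q^{\pi^*}_{T-k+1}$ notation is meant to be applied, and it gives
\begin{equation*}
\eta(\hat\pi_{k+1},\mdp) - \eta(\hat\pi_k,\mdp) \;=\; \EE_{s_k \sim p^{\pi}_k}\!\bigl[Q^{\pi^*}_{T-k+1}(s_k,\pi(s_k)) - Q^{\pi^*}_{T-k+1}(s_k,\pi^*(s_k))\bigr],
\end{equation*}
where $p^\pi_k$ is the state distribution under $\pi$ at step $k$.

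Next I would bound the single-step term using the hypothesis. When $\pi(s_k)=\pi^*(s_k)$ the integrand is zero, and when $\pi(s_k)\ne\pi^*(s_k)$ the assumption $Q^{\pi^*}_{T-k+1}(s,\pi^*) - Q^{\pi^*}_{T-k+1}(s,a) \le u$ gives a deviation at most $u$. Combining both cases yields
\begin{equation*}
\eta(\hat\pi_{k+1},\mdp) - \eta(\hat\pi_k,\mdp) \;\ge\; -u\,\EE_{s_k \sim p^{\pi}_k}[\ell(s_k,\pi)].
\end{equation*}
Summing over $k=1,\dots,T$ and interpreting $\rho_\pi$ as the (average) state visitation $\tfrac{1}{T}\sum_{k=1}^T p^\pi_k$ collapses the sum into $uT\,\EE_{s\sim\rho_\pi}[\ell(s,\pi)] = uT\epsilon$, yielding $\eta(\pi,\mdp) \ge \eta(\pi^*,\mdp) - uT\epsilon$.

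The one genuinely delicate step is the bookkeeping on indexing: one must verify that the hybrid construction really leaves both $\hat\pi_k$ and $\hat\pi_{k+1}$ with the same state distribution at step $k$ (so that the difference is a clean single-action expectation), and that the remaining $T-k$ rewards are correctly captured by $Q^{\pi^*}_{T-k+1}$ evaluated at a single action. The only other subtlety is reconciling the notation $\EE_{s\sim\rho_\pi}$ in the hypothesis with the sum $\sum_{k=1}^T \EE_{s_k\sim p^\pi_k}$ produced by the telescoping, which amounts to viewing $\rho_\pi$ as the time-averaged visitation; this is the undiscounted finite-horizon analog of the occupancy measure defined earlier in the paper.
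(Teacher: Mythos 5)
Your proof is correct and takes the same route as the source the paper relies on: the paper states this theorem as a direct import of Theorem 2.2 from Ross et al.\ (2011) without reproving it, and your hybrid-policy telescoping argument — same state distribution $p^\pi_k$ for consecutive hybrids, single-action difference expressed via $Q^{\pi^*}_{T-k+1}$, the zero/$u$ case split against $\ell(s,\pi)$, and the time-averaged reading of $\rho_\pi$ in the finite-horizon undiscounted setting — is precisely the canonical proof of that cited result, with the cost-to-reward sign flip handled correctly.
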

Thus the important quantity to measure is $\epsilon$, and by measuring the disagreements between two policies in two views, we can upper bound $\epsilon$. The result is originally stated in the context of classification, and the above theorem justifies the learning reduction approach of reducing policy learning to classification. 

\begin{theorem} 
\label{theorem:pac}
(Corollary 5 in \citep{dasgupta2002pac} applied to full classifiers) Using the definitions in Theorem \ref{theorem:special}, with probability $1-\sigma$ over the choice of a sample set $N$, for all pairs of classifiers $h_1, h_2$ such that for all $i$ we have $\zeta_i(h_1, h_2, \sigma) > 0$ and $b_i(h_1, h_2, \sigma)\le 1$.
\begin{equation*}
    \epsilon \le \max\nolimits_{j\in\{1, \cdots, k\}}b_j(h_1, h_2, \sigma)
\end{equation*}
\end{theorem}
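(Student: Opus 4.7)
My plan is to follow the two-stage template from the Dasgupta--Littman--McAllester co-training analysis, adapted to the ``full classifier'' setting where $h_1, h_2$ always predict one of the $k$ labels (rather than being allowed to abstain). The first stage is a purely statistical uniform concentration step that justifies the form of $\epsilon_i$ and the log-cardinality penalty; the second stage is a deterministic structural step that uses Assumption~\ref{assumption:independence} (conditional independence of the two views given $a^*$) to turn observed inter-view disagreement rates into upper bounds on the true error rate.

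For the concentration stage, I would fix a pair $(h_1, h_2)$ and a label $i \in \{1,\ldots,k\}$, and apply Hoeffding's inequality to the $N(a^B=i)$ Bernoulli samples that count whether $h_1$ outputs $i$ conditional on $h_2$ outputting $i$. This yields $|\hat{\mathbb{P}}(a^A = i \mid a^B = i) - \mathbb{P}(a^A = i \mid a^B = i)| \le \epsilon_i(h_1, h_2, \sigma)$ with probability $1 - \sigma / (2 k \cdot 2^{|h_1|+|h_2|})$, and symmetrically for the disagreement probability. Then I would union-bound over all $k$ labels and all classifier pairs encoded in $|h_1| + |h_2|$ bits, which explains the $\ln 2 \,(|h_1|+|h_2|) + \ln(2k/\sigma)$ inside the square root that defines $\epsilon_i$. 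After this step, on a single high-probability event, every term $\hat{\mathbb{P}}(\cdot \mid a^B = i)$ is within $\epsilon_i$ of its true value, simultaneously for all $h_1, h_2, i$.

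For the structural stage, I would condition on $a^B = i$ and exploit Assumption~\ref{assumption:independence}: given $a^*$, the variables $a^A$ and $a^B$ are independent, so on the event $\{a^B = i\}$ the conditional distribution of $a^A$ is a mixture $\sum_j w_j^{(i)} \, \mathbb{P}(a^A = \cdot \mid a^* = j)$ with weights $w_j^{(i)} \propto \mathbb{P}(a^B = i \mid a^* = j) \,\mathbb{P}(a^* = j)$. Splitting $\mathbb{P}(a^A = i \mid a^B = i)$ and $\mathbb{P}(a^A \ne i \mid a^B = i)$ according to whether $a^* = i$ or $a^* \ne i$, and subtracting, one gets an identity of the form
\begin{equation*}
\mathbb{P}(a^A = i \mid a^B = i) - \mathbb{P}(a^A \ne i \mid a^B = i)
= w_i^{(i)} \bigl(1 - 2\,\mathbb{P}(a^A \ne i \mid a^* = i)\bigr) + (\text{cross terms}),
\end{equation*}
and a parallel identity lets one solve for the class-conditional error rate $\mathbb{P}(a^A \ne a^* \mid a^B = i)$ as a ratio whose numerator is $\mathbb{P}(a^A \ne i \mid a^B = i)$ and whose denominator is the signed gap $\mathbb{P}(a^A = i \mid a^B = i) - \mathbb{P}(a^A \ne i \mid a^B = i)$. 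Substituting the empirical $\hat{\mathbb{P}}$ values (each within $\epsilon_i$ of the truth) into numerator and denominator and using the positivity hypothesis $\zeta_i > 0$ to control the denominator from below produces exactly the quantity $b_i(h_1, h_2, \sigma)$ as an upper bound on the conditional error rate on $\{a^B = i\}$.

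Finally, since the overall error of $h_1$ decomposes as a mixture over the events $\{a^B = i\}$, the total error is bounded by $\max_j b_j(h_1, h_2, \sigma)$, completing the claim. The main obstacle will be the structural stage: I need to carry out the mixture algebra carefully to verify that the cross terms arising from the $j \ne i$ summands do not spoil the inequality, and that the rearrangement producing $b_i$ really upper-bounds the error rate rather than just the disagreement rate. The positivity assumption $\zeta_i > 0$ is precisely what guarantees this denominator stays safely bounded away from zero, and the condition $b_i \le 1$ ensures the resulting bound is non-vacuous. The Hoeffding-plus-union-bound step is standard and should plug in with minimal additional work.
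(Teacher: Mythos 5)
Your proposal attempts substantially more than the paper does: the paper's own proof of this theorem is a one-line specialization remark --- it takes Corollary 5 of \citep{dasgupta2002pac} as given and observes only that for full classifiers one can drop the estimates of $\mathbb{P}(h_1 = \bot)$ and the error from randomly relabeling abstentions. Your concentration stage is consistent with the mechanism behind that cited result: Hoeffding applied conditionally on the count $N(a^B{=}i)$, union-bounded over the $k$ labels and over prefix-encoded classifier pairs (at most $2^{|h_1|+|h_2|}$ of a given description length), does produce exactly the $\ln 2\,(|h_1|+|h_2|) + \ln(2k/\sigma)$ penalty in $\epsilon_i$. That part is standard and fine.

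The structural stage, however, contains a genuine gap: the label-by-label lemma you aim for --- deriving $\mathbb{P}(a^A \ne a^* \mid a^B = i) \le b_i$ from the mixture algebra at label $i$ alone --- is false. Take $k=2$, $\pi^A \equiv 1$ constant, $a^*$ uniform, and $s^B$ pure noise so that $a^B$ is a fair coin independent of $a^*$ (Assumption 2 holds trivially). Conditioned on $a^B = 1$ the agreement is perfect, so $\zeta_1 \approx 1 - 2\epsilon_1 > 0$ and $b_1 \approx \epsilon_1/(1-2\epsilon_1)$ is tiny, yet $\mathbb{P}(a^A \ne a^* \mid a^B = 1) = 1/2$. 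The theorem survives only because its hypothesis fails at the \emph{other} label ($\zeta_2 < 0$): the requirement $\zeta_j > 0$ for \emph{all} $j$ simultaneously is the load-bearing condition in the actual proof (it forces the posterior of $a^*$ given $\{a^B = i\}$ to concentrate on a distinct label for each $i$), and your per-label derivation never uses it, so the cross terms you flagged as ``the main obstacle'' really do spoil the inequality rather than being a verification detail. Moreover, no argument using only unlabeled agreement statistics can close the proof at all: if both policies equal a fixed nontrivial permutation of $\pi^*$ (binary case, both always output the wrong action), then every $\zeta_i \approx 1$ and every $b_i \approx 0$ while $\epsilon_A = 1$. The bound is only true up to a permutation of action labels; the paper handles this via the prose assumption, stated just before Theorem 3, that enough state-action pairs from $\pi^*$ are available to recover the permutation, and the cited Corollary 5 is formulated accordingly. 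A correct reconstruction therefore needs two ingredients absent from your sketch: a global argument exploiting $\zeta_j > 0$ and $b_j \le 1$ jointly across all $k$ labels, and the permutation-anchoring step that pins the dominant label of each event $\{a^B = i\}$ to $i$ itself.
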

\begin{proof}
The only change from the original proof is that instead of a partial classifier which can output $\bot$, we consider a full classifier. Then we could eliminate the estimates for $\mathbb{P}(h_1\ne\bot)$ and the error introduced by converting a partial classifier to a full classifier via random labelling when the output is $\bot$.
\end{proof}

\textbf{Proof of Theorem \ref{theorem:special}:}
\begin{proof}
For the bound for $\pi^A$, we are measuring $\epsilon_A$ on its sampled paths. Then directly apply Theorem \ref{theorem:pac} gives an upper bound on $\epsilon_A$. Apply Theorem \ref{theorem:gap} gives the result of Theorem \ref{theorem:special}.
\end{proof}

\subsection{PICTORIAL REPRESENTATION OF THE TWO-VIEWS IN RISK-AWARE PATH PLANNING:}
\label{sec:pic_rep_psulu}

We present a pictorial representation of the two different views used in the experiments in Fig~\ref{fig:exp:puslu_rep}. In the MILP view, the constraint space is represented using additional auxiliary binary variables to choose the active side of the polytope, whereas in the QCQP view, the constraint space can be encoded in a quadratic constraint.

\begin{figure}[t]
    \includegraphics[trim={0pt 0pt 0pt 0pt}, width=0.45\textwidth]{./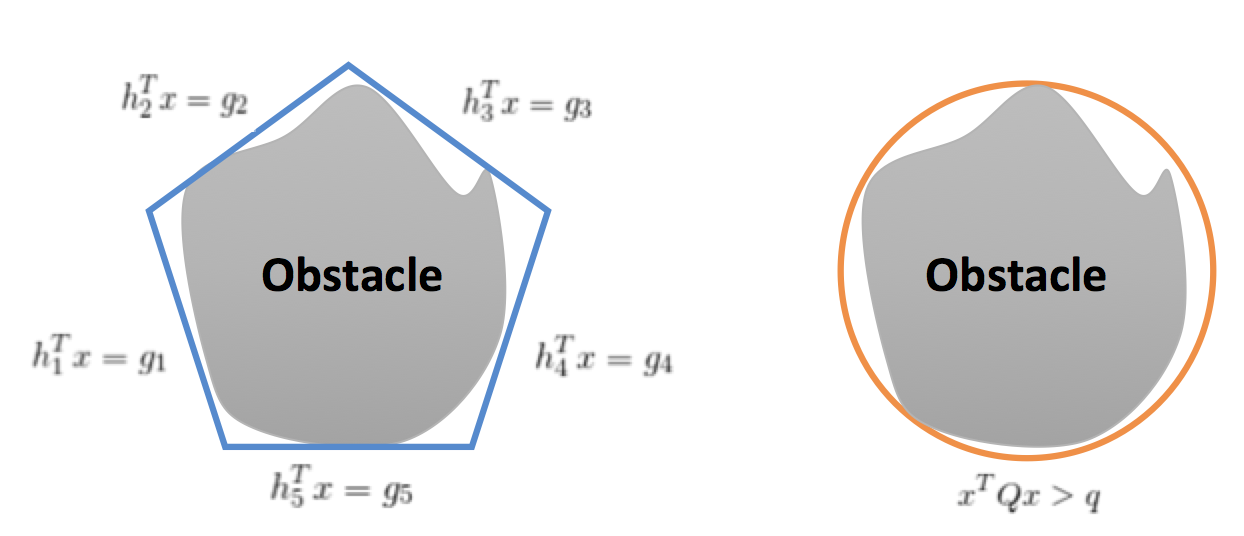}
    \caption{Two views for Risk-Aware Path Planning. On the left, the obstacle is enclosed by a polytope (MILP view) and on the right the obstacle is enclosed by an ellipse (QCQP view).}
    \label{fig:exp:puslu_rep}
\end{figure}

\subsection{RISK-AWARE PLANNING DATASET GENERATION:}
\label{sec:data_gen_psulu}
We generate 150 obstacle maps. Each map contains 10 rectangle obstacles, with the center of each obstacle chosen from a uniform random distribution over the space $0\leq y \leq 1$, $0 \leq x \leq 1$. The side length of each obstacle was chosen from a uniform distribution in range [0.01, 0.02] and the orientation was chosen from a uniform distribution between $\ang{0}$ and $\ang{360}$. In order to avoid trivial infeasible maps, any obstacles centered close to the destination are removed. For MILP view, we directly use the randomly generated rectangles for defining the constraint space. However, for the QCQP view, we enclose the rectangle obstacles with a circle for defining the quadratic constraint.

\subsection{DISCRETE/CONTINUOUS CONTROL RESULTS IN TABULAR FORM}
\label{sec:control:tabular}
\begin{tabular}{|c|c|c|c|}
    \hline 
    & Acrobot & Swimmer & Hopper \\
    \hline
    A (CoPiEr) & $-86.44 \pm 10.80$ & $106.35\pm 23.11$ & $217.83\pm 30.03$ \\
    \hline 
    A (PG) &  $-169.57\pm 10.48$ & $109.09 \pm 21.58$ & $278.66\pm 32.87$ \\
    \hline 
    A (All) & $-252.42 \pm 8.73$ & $100.36\pm 22.37$ & $49.39\pm 10.35$ \\
    \hline 
    B (CoPiEr) & $-88.48 \pm 15.13$ & $104.16\pm 19.32$ & $168.88\pm 18.21$ \\
    \hline 
    B (PG) & $-257.16\pm 10.93$ & $103.48\pm 21.89$ & $89.34\pm 4.89$ \\
    \hline 
    B (All) & $-251.74\pm 9.65$ & $96.74\pm 19.57$ & $22.59\pm 5.55$ \\
    \hline 
    A + B & $-86.42\pm 3.48$ & $108.71\pm 5.03$ & $346.53\pm 5.91$\\
    \hline
\end{tabular}

\end{document}